\documentclass[11pt]{amsart}
\usepackage{hyperref}
\usepackage{halloweenmath}
\usepackage{amsmath,amsfonts,amssymb,amscd,amsthm,amsbsy,amsxtra,bbm,bm, epsf,calc,comment,appendix, xcolor}
\usepackage{color}
\usepackage{datetime}
\usepackage{latexsym}
\usepackage[english]{babel}
\usepackage{enumerate}
\usepackage{graphicx}
\usepackage{epsfig}
\usepackage{dsfont}
\usepackage{tikz}

\addtolength{\oddsidemargin}{-.68in}
	\addtolength{\evensidemargin}{-.68in}
	\addtolength{\textwidth}{1.36in}

	\addtolength{\topmargin}{-.7in}
	\addtolength{\textheight}{0.8in}



\definecolor{mygreen}{rgb}{0.1,0.75,0.2}


\DeclareMathOperator{\Prob}{\mathbf{P}}

\DeclareSymbolFont{bbold}{U}{bbold}{m}{n}
\DeclareSymbolFontAlphabet{\mathbbold}{bbold}



\newcommand{\spt}{\textup{spt}}

\newcommand{\pretv}{\widetilde{\text{TV}}}

\newcommand{\X}{\mathcal{X}}
\newcommand{\Y}{\mathcal{Y}}
\newcommand{\Z}{\mathcal{Z}}
\newcommand{\F}{\mathcal{F}}
\newcommand{\R}{\mathbb{R}}
\def\N{\mathbb{N}}

\setcounter{section}{0}
\numberwithin{equation}{section}
\newtheorem{theorem}{Theorem}[section]
\newtheorem{lemma}[theorem]{Lemma}
\newtheorem{proposition}[theorem]{Proposition}
\newtheorem{corollary}[theorem]{Corollary}

\newtheorem{assumption}[theorem]{Assumption}

\theoremstyle{remark}
\newtheorem{remark}[theorem]{Remark}

\theoremstyle{definition}
\newtheorem{definition}[theorem]{Definition}

\usepackage{amsthm}
\usepackage{hyperref}
\hypersetup
{colorlinks = true,
linkcolor = red, 
anchorcolor = red, 
citecolor = blue, 
filecolor = blue,
urlcolor = blue}
\usepackage{hypcap}
\usepackage{cleveref}
\usepackage{lipsum}

\usepackage{enumerate}
\usepackage{enumitem}

\title[Existence of Robust classifier]{On the existence of solutions to adversarial training in multiclass classification}

\author{Nicol\'as {Garc\'ia Trillos}}
\address{Department of Statistics, University of Wisconsin-Madison, 1300 University Avenue, Madison, Wisconsin 53706, USA.}
    \email{garciatrillo@wisc.edu}
\author{Matt Jacobs}
\address{Department of Mathematics, 
Purdue University, 
Mathematical Sciences Bldg, 150 N University St, West Lafayette, IN 47907}
\email{jacob225@purdue.edu}
\author{Jakwang Kim}
\address{Department of Statistics, University of Wisconsin-Madison, 1300 University Avenue, Madison, Wisconsin 53706, USA.}
\email{kim836@wisc.edu}

\date{\today}


\keywords{adversarial training, multimarginal optimal transport, total variation regularization, generalized barycenter problem, multiclass classification, robust classifier}

\begin{document}

\maketitle

\begin{abstract}
We study three models of the problem of adversarial training in multiclass classification designed to construct robust classifiers against adversarial perturbations of data in the agnostic-classifier setting. We prove the existence of Borel measurable robust classifiers in each model and provide a unified perspective of the adversarial training problem, expanding the connections with optimal transport initiated by the authors in their previous work \cite{trillos2023multimarginal} and developing new connections between adversarial training in the multiclass setting and total variation regularization.  As a corollary of our results, we prove the existence of Borel measurable solutions to the agnostic adversarial training problem in the binary classification setting, a result that improves results in the literature of adversarial training, where robust classifiers were only known to exist within the enlarged universal $\sigma$-algebra of the feature space. 
\end{abstract}

\section{Introduction}

Modern machine learning models, in particular those generated with deep learning, perform remarkably well, in many cases much better than humans, at classifying data in a variety of challenging application fields like image recognition, medical image reconstruction, and natural language processing. However, the robustness of these learning models to data perturbations is a completely different story.  For example, in image recognition, it has been widely documented (e.g., \cite{goodfellow_examples}) that certain structured but human-imperceptible modifications of images at the pixel level can fool an otherwise well-performing image classification model.  These small data perturbations, known as \emph{adversarial attacks}, when deployed at scale can make a  model's prediction accuracy drop substantially and in many cases collapse altogether. As such, they are a significant obstacle to the deployment of machine learning systems in security-critical applications, e.g. \cite{BIGGIO2018317}. To defend against these attacks, many researchers have investigated the problem of adversarial training, i.e., training methods that produce models that are robust to attacks.  In adversarial training, one typically pits the adversary against the learner during the training step, forcing the learner to select a model that is robust against attacks. Nonetheless, despite the attention that has been devoted to understanding these problems, theoretically and algorithmically, there are still several important mathematical questions surrounding them that have not been well understood.

A fundamental difficulty in adversarial training, in contrast to standard training of learning models, is the fact that the adversary has the power to alter the underlying data distribution. In particular, model training becomes an implicit optimization problem over a space of measures, as a result, one may be forced to leave the prototypical setting of equivalence classes of functions defined over a single fixed measure space. In general, measurability issues become more delicate for adversarial training  problems at the moment of providing a rigorous mathematical formulation for the 
problem.
Due to these difficulties, there are several subtle variations of the adversarial training model in the literature and it has not been clear whether these models are fully equivalent. More worryingly, for some models, even the \emph{existence} of optimal robust classifiers is unknown, essentially due to convexity and compactness issues.

Let us emphasize that these issues arise even in what can be regarded as the simplest possible setting of the agnostic learner, i.e. where the space of classifiers is taken to be the set of all possible Borel measurable weak (probabilistic) classifiers.  While this setting is trivial in the absence of an adversary (there the optimal choice for the learner is always the Bayes classifier),  the structure of the problem is much more subtle in the adversarial setting (in other words the analog of the Bayes classifier is not fully understood).  With an adversary, the training process can be viewed as a two-player min-max game (learner versus adversary) \cite{bose2020adversarial, Meunier2021MixedNE, pydi2021the, pmlr-v206-balcan23a} and as a result, the optimal strategies for the two players are far from obvious.  By relaxing the problem to the agnostic setting, one at least is working over a convex space, but again measurability issues pose a problem  for certain formulations of adversarial training.

In light of the above considerations, the purpose of this paper is twofold. On one hand, we provide rigorous justification for the existence of \textit{Borel}-measurable robust classifiers in the multiclass classification setting for three different models of adversarial training.  Notably, our analysis includes a widely used model for which the existence of Borel classifiers was not previously known and existence of solutions had only been guaranteed when enlarging the original Borel $\sigma$-algebra of the data space. On the other hand, we develop a series of connections between the three mathematical models of adversarial training discussed throughout the paper exploiting ideas from optimal transportation and total variation minimization. By developing these connections, we hope to present a unified formulation of adversarial training and highlight the prospective advantages of using tools in computational optimal transport for solving these problems in the agnostic-classifier setting (and perhaps beyond the agnostic setting too). We also highlight, in concrete terms, the connection between adversarial training and the direct regularization of learning models. To achieve all the aforementioned goals, we expand and take advantage of our previous work \cite{trillos2023multimarginal} as well as of the work \cite{bungert2023geometry} exploring the connection between adversarial training and perimeter minimization in the binary classification setting.



\subsection{Organization of the paper}
The rest of the paper is organized as follows. In section \ref{preminaries}, we introduce three different models for adversarial training in the multiclass classification setting that we will refer to as the open-ball model, the closed-ball model, and the distributional-perturbing (DRO) model. In section \ref{sec:MainResults} we state our main mathematical results and in section \ref{sec:Discussion} we discuss related literature and some of the implications of our results. In section \ref{section : existence dual potential} we lay down the main mathematical tools for analyzing the DRO model. Part of these tools come directly from our previous work \cite{trillos2023multimarginal}, while others are newly developed. In section \ref{section : Borel robust classifier} we prove our main results: first, we prove the existence of solutions for the DRO model (section \ref{sec:WellPosednes}); then we prove that solutions to the DRO model are solutions to the closed-ball model (section \ref{sec:ClosedBallWell}); finally, we relate the closed-ball model to the open-ball model in section \ref{section: unify all models}. Lastly, in section \ref{section : conclusion and future works} we wrap up the paper and discuss future research directions.

\section{Set-up and main results}\label{preminaries}
The setting of our problem will be a feature space $(\mathcal{X},d)$ (a Polish space with metric $d$) and a label space $\Y:=\{1, \dots, K\}$, which will represent a set of $K$ labels for a given classification problem of interest. We denote by $\mathcal{Z}:=\mathcal{X} \times \Y$ the set of input-to-output pairs and endow it with a Borel probability measure $\mu \in \mathcal{P}(\Z)$,  representing a ground-truth data distribution. For convenience, we will often describe the measure $\mu$ in terms of its class probabilities $\mu=(\mu_1, \dots, \mu_K)$, where each $\mu_i$ is the positive Borel measure (not necessarily a probability measure) over $\mathcal{X}$ defined according to:
\[  \mu_i(A) = \mu (A \times \{ i\}),\]
for $A \in\mathfrak{B}(\X)$, i.e., $A$ is a Borel-measurable subset of $\X$. Notice that the measures $\mu_i$ are, up to normalization factors, the conditional distributions of inputs/features given the different output labels.

Typically, a (multiclass) classification rule in the above setting is simply a Borel measurable map $f: \X \rightarrow \Y$. In this paper, however, it will be convenient to expand this notion slightly and interpret general classification rules as Borel measurable maps from $\X$ into  $\Delta_{\Y} := \left\{ (u_i)_{i \in \Y} : 0 \leq u_i \leq 1, \sum_{i \in \Y} u_i \leq 1 \right\}$, the set of (up to normalization constants) probability distributions over $\Y$ (see remark \ref{rem:01Loss}); oftentimes these functions are known as \textit{soft-classifiers}. For future reference, we denote by $\F$ the set
\begin{equation}
    \mathcal{F}:= \left\{ f: \mathcal{X} \to \Delta_{\Y} : f \text{ is Borel measurable} \right\}.
    \label{def:F}
\end{equation}
Given $f \in \F$ and $x \in \X$, the vector $f(x) = (f_1(x), \dots, f_K(x))$ will be interpreted as the vector of probabilities over the label set $\Y$ that the classifier $f$ assigns to the input data point $x$. In practice, from one such $f$ one can induce actual (hard) class assignments to the different inputs $x$ by selecting the coordinate in $f(x)$ with largest entry. The extended notion of classifier considered in this paper is actually routinely used in practice as it fares well with the use of standard optimization techniques (in particular, $\F$ is natural as it can be viewed as a convex relaxation of the space of maps from $\X$ to $\Y$).





The goal in the standard (unrobust) classification problem is to find a classifier $f\in \F$ that gives accurate class assignments to inputs under the assumption that data points are distributed according to the ground-truth distribution $\mu$. This aim can be mathematically modeled as an optimization problem of the form:
\begin{equation}
    \inf_{f \in \mathcal{F}} R(f, \mu),
    \label{eq:UnrobustRiskMinimization}
\end{equation}
where $R(f, \mu)$ is the risk of a classifier $f$ relative to the data distribution $\mu$:
\begin{equation*}
    R(f, \mu):=\mathbb{E}_{(X,Y) \sim \mu} [\ell (f(X), Y)) ].
\end{equation*}
The loss function $\ell: \Delta_\Y \times \Y  \rightarrow \R$ appearing in the definition of the risk can be chosen in multiple reasonable ways, but here we restrict to the choice 
 \[\ell(u, i) := 1 - u_i, \quad (u,i) \in \Delta_\Y \times \Y,\] 
 which, in lieu of the fact that $\ell(e_j , i) $ is equal to $1$ if $i \not = j$ and $0$ if $i=j$ ($e_j$ is the extremal point of $\Delta_\Y$ with entry one in its $j$-th coordinate), will be referred to as the $0$-$1$ loss. Note that under the $0$-$1$ loss function the risk $R(f, \mu)$ can be rewritten as 
 \[ R(f, \mu)= \sum_{i \in \Y} \int_{\mathcal{X}} \left(1 - f_i(x) \right) d\mu_i(x).\]
Moreover, one can observe that solutions to the risk minimization problem \eqref{eq:UnrobustRiskMinimization} are the standard multiclass Bayes classifiers from statistical learning theory (e.g., see \cite{Bousquet2004,LUXBURG2011651}). These classifiers are characterized by the condition $f_{Bayes,i}^*(x)=0$ if $\Prob_{(X, Y)\sim \mu}(Y=i| X=x) \not = \max_{j \in \Y} \Prob_{(X, Y)\sim \mu}(Y=j| X=x) $ for all $i$, and it is always possible to select a Bayes classifier of the form  $f^*_{Bayes}(x) = (\mathds{1}_{A^*_1}(x), \dots, \mathds{1}_{A^*_K}(x))$, where $A_1^*, \dots, A_K^*$ form a measurable partition of $\X$. In other words, there always exist hard classifiers that solve the risk minimization problem \eqref{eq:UnrobustRiskMinimization}.

By definition, a solution to \eqref{eq:UnrobustRiskMinimization} classifies \textit{clean} data optimally; by clean data here we mean data distributed according to the original distribution $\mu$. However, one should not expect the standard Bayes classifier to perform equally well when inputs have been adversarially contaminated, and the goal in adversarial training is precisely to create classifiers that are less susceptible to data corruption. One possible way to enforce this type of robustness is to replace the objective function in \eqref{eq:UnrobustRiskMinimization} with one that incorporates the actions of a well-defined adversary, and then search for the classifier that minimizes the new notion of (adversarial) risk. This adversarial risk can be defined in multiple ways, but two general ways stand out in the literature and will be the emphasis of our discussion; we will refer to these two alternatives as  \textit{data-perturbing adversarial model} and \textit{distribution-perturbing adversarial model}. As it turns out, there exist connections between the two (see \cite{pydi2021the} for more details) and we will develop further connections shortly.

For the \textit{data-perturbing adversarial model} we will consider the following two versions:
\begin{align}
\label{def:open-ball_model}
    &R^o_{\varepsilon} := \inf_{f \in \mathcal{F}} R^o_{\varepsilon}(f) := \inf_{f \in \mathcal{F}} \left\{ \sum_{i \in \Y} \int_{\mathcal{X}} \sup_{\widetilde{x} \in B_{\varepsilon}(x)} \{ 1 - f_i(\widetilde{x}) \} d\mu_i(x) \right\},\\
    &\label{def:closed-ball_model}
    \inf_{f \in \mathcal{F}} \left\{ \sum_{i \in \Y} \int_{\mathcal{X}} \sup_{\widetilde{x} \in \overline{B}_{\varepsilon}(x)} \{ 1 - f_i(\widetilde{x}) \} d\mu_i(x) \right\}.
\end{align}
Here $B_{\varepsilon}(x)$  ($\overline{B}_{\varepsilon}(x)$, respectively) denotes an open (closed) ball with radius $\varepsilon$ centered at $x$. In both versions, the adversary can substitute any given input $x$ with a $\tilde x$ that belongs to a small ball of radius $\varepsilon$ around the original $x$. In this setting, the learner's goal is to minimize the worst-loss that the adversary may induce by carrying out one of their feasible actions. Although at the heuristic level the difference between the two models is subtle (in the first model the adversary optimizes over open balls and in the second over closed balls), at the mathematical level these two models can be quite different. For starters, the problem \eqref{def:closed-ball_model} is not well-formulated, as it follows from a classical result in \cite{luzin1919quelques}, which discusses that, in general, the function $x \mapsto \sup_{\tilde x \in \overline{B}_\varepsilon(x)} \{ 1- f_i(\tilde x) \}$ may not be Borel-measurable when only the Borel-measurability of the function $f_i$ has been assumed. For this reason, the integral with respect to $\mu_i$ in \eqref{def:closed-ball_model} (which is a Borel positive measure, i.e., it is only defined over the Borel $\sigma$-algebra) may not be defined for all $f\in \F$. In \eqref{def:closed-ball_model-corrected} we provide a rigorous formulation of \eqref{def:closed-ball_model} (which at this stage should only be interpreted informally). This reformulation will require the use of an extension of the Borel $\sigma$-algebra, known as the universal $\sigma$-algebra, as well as an extension of the measures $\mu_i$ to this enlarged $\sigma$-algebra. Problem \eqref{def:open-ball_model}, on the other hand, is already well formulated, as no measurability issues arise when taking the sup over open balls. At a high level, this is a consequence of the fact that arbitrary unions of open balls are open sets and thus Borel-measurable; see, for example, Remark 2.3 in \cite{bungert2023geometry}.  Regardless of which of the two models one adopts, and putting aside for a moment the measurability issues mentioned above, it is unclear whether it is possible to find minimizers for any of the problems  \eqref{def:open-ball_model} and \eqref{def:closed-ball_model} within the family $\F$.

The \textit{distributional-perturbing adversarial model} is defined as a minimax problem that can be described as follows: after the learner has chosen a classifier $f \in \F$, an adversary selects a new data distribution $\widetilde{\mu} \in \mathcal{P}(\mathcal{Z})$, and, by paying some cost $C(\mu, \widetilde{\mu})$, attempts to make the risk $R(f, \widetilde \mu)$ be as large as possible. Precisely, we consider the problem
\begin{equation}\label{def:DRO_model}
    R_{DRO}^*:=\inf_{f \in \mathcal{F}} \sup_{\widetilde{\mu} \in \mathcal{P}(\mathcal{Z})} \left\{ R(f, \widetilde{\mu}) - C(\mu, \widetilde{\mu}) \right\},
\end{equation}
where $C : \mathcal{P}(\mathcal{Z}) \times \mathcal{P}(\mathcal{Z}) \to [0, \infty ]$ has the form:
\begin{equation*}
    C(\mu, \widetilde{\mu}) := \inf_{\pi \in \Gamma(\mu, \widetilde{\mu})} \int c_{\mathcal{Z}}(z, \widetilde{z}) d\pi(z, \widetilde{z}),
\end{equation*}
for some Borel measurable cost function $ c_{\mathcal{Z}}: \mathcal{Z} \times \mathcal{Z} \rightarrow [0,\infty]$. Here and in the remainder of the paper, we use $\Gamma(\cdot, \cdot)$ to represent the set of couplings between two positive measures over the same space; for example, $\Gamma(\mu, \widetilde{\mu})$ denotes the set of positive measures over $\mathcal{Z} \times \mathcal{Z}$ whose first and second marginals are ${\mu}$ and $\widetilde \mu$, respectively. Note that problem \eqref{def:DRO_model} is an instance of a \textit{distributionally robust optimization} (DRO) problem. Problem \eqref{def:DRO_model} is well-defined given that all its terms are written as integrals of Borel measurable integrands against Borel measures.

In the remainder, we will assume that the cost $c_\Z: \Z \times \Z \rightarrow [0,\infty]$ has the form
\begin{equation}
\label{assump:CostStructure}
c_\Z(z,\tilde z) := \begin{cases} c(x, \tilde x ) & \text{ if } y=\tilde y \\ \infty & \text{ otherwise }, \end{cases}
\end{equation}
for a lower semi-continuous function $c : \X \times \X \rightarrow [0,\infty]$. Note that when $c_\Z$ has the above structure we can rewrite $C(\mu, \widetilde \mu)$ as
\[ C(\mu, \widetilde \mu) = \sum_{i=1}^K C(\mu_i, \widetilde \mu_i),  \]
where on the right-hand side we slightly abuse notation and use $C(\mu_i, \widetilde \mu_i)$ to represent
\[  C(\mu_i, \widetilde \mu_i):= \inf_{\pi \in \Gamma(\mu_i, \widetilde \mu_i) } \int c(x, \tilde x) d \pi(x, \tilde x). \]
A typical example of a cost $c$ that we will discuss in detail throughout this paper is the cost function:
\begin{equation}
\label{def:CostEpsilon}
    c(x,\tilde x) = c_\varepsilon(x,\tilde x) := \begin{cases}
    \infty & \text{if } d(x,\tilde x) >\varepsilon \\
    0 & \text{if } d(x, \tilde x) \leq \varepsilon,
    \end{cases}
\end{equation}
where in the above $\varepsilon$ is a positive parameter that can be interpreted as \textit{adversarial budget}.

\begin{remark}
Throughout the paper, we use the convention that $C(\mu_i, \widetilde{\mu}_i) = \infty$ whenever the set of couplings $\Gamma(\mu_i, \widetilde{\mu}_i)$ is empty. This is the case when $\mu_i$ and $\widetilde{\mu}_i$ have different total masses.
\end{remark}

\begin{remark}
\label{rem:01Loss}
Given the structure of the 0-1 loss function considered here, in all the adversarial models introduced above we may replace the set $\F$ with the set of those $f \in \F$ for which $\sum_i f_i =1$. Indeed, given $f \in \F$ we can always consider $\tilde f \in \F$ defined according to $\tilde{f}_{i_0}:= f_{i_0} +(1- \sum_{i\in \Y} f_i) $ and $\tilde f_i = f_i $ for $i\not = i_0$ to obtain a value of risk that is no greater than the one of the original $f$.
\end{remark}

\subsection{Main results}
\label{sec:MainResults}

Our first main theorem discusses the existence of (Borel) solutions for problem \eqref{def:DRO_model} under the assumptions on the cost $c: \X \times \X \rightarrow [0, \infty]$ stated below.

\begin{assumption}
\label{assump}
 We assume that the cost $c: \X\times \X \rightarrow[0, \infty]$ is a lower semi-continuous and symmetric function satisfying $c(x,x)=0$ for all $x\in \X$. We also assume the the following compactness property holds:
if $\{ x_n \}_{n \in \mathbb{N}}$ is a bounded sequence in $(\X, d)$ and $\{ x_n' \}_{n \in \mathbb{N}}$ is a sequence satisfying $\sup_{n \in \mathbb{N}} c(x_n, x_n')< \infty$, then $\{(x_n, x_n') \}_{n \in \mathbb{N}}$ is precompact in $\X \times \X$ (endowed with the product topology).
\end{assumption}

\begin{remark}
\label{rem:BoundedSetsCompact}
Notice that Assumption \ref{assump} implicitly requires bounded subsets of $\X$ to be precompact. 
\end{remark}

\begin{theorem}\label{thm : measurable robust classifier}
Suppose that $c: \X \times \X \rightarrow [0,\infty]$ satisfies Assumption \ref{assump}. Then
there exists a (Borel) solution $f^*$ of the DRO model \eqref{def:DRO_model}. Furthermore, there exists $\widetilde \mu ^* \in \mathcal{P}(\Z)$ such that $( f^*, \widetilde \mu^* )$ is a saddle point for \eqref{def:DRO_model}. In other words, the following holds: for any $\widetilde{\mu} \in \mathcal{P}(\mathcal{Z})$ and any $f \in \mathcal{F}$ we have
\begin{equation}
\label{eq: saddle_point_inequality}
    R(f^*, \widetilde{\mu}) - C(\mu, \widetilde{\mu}) \leq R(f^*, \widetilde{\mu}^*) - C(\mu, \widetilde{\mu}^*) \leq R(f, \widetilde{\mu}^*) - C(\mu, \widetilde{\mu}^*).
\end{equation}
\end{theorem}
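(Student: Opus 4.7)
The plan is to invoke a Sion-type minimax argument, combined with the compactness built into Assumption \ref{assump} and the optimal transport machinery of Section \ref{section : existence dual potential}, to produce the saddle point. Write $\Phi(f,\widetilde{\mu}) := R(f,\widetilde{\mu}) - C(\mu,\widetilde{\mu})$. For fixed $\widetilde{\mu}$, $f \mapsto \Phi(f,\widetilde{\mu})$ is affine by the form of the $0$-$1$ loss; for fixed $f$, $\widetilde{\mu} \mapsto \Phi(f,\widetilde{\mu})$ is concave (since $R(f,\cdot)$ is linear and $C(\mu,\cdot)$ is convex as an infimum of linear functionals over couplings) and weakly upper semi-continuous (by the standard lower semi-continuity of the Kantorovich cost under lower semi-continuity of $c$).

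My first step would be to restrict the sup to a weakly compact set. Since $R \leq 1$ uniformly, the supremum in \eqref{def:DRO_model} is unaffected if one restricts to the sublevel set $\mathcal{Q}_M := \{\widetilde{\mu} \in \mathcal{P}(\mathcal{Z}) : C(\mu,\widetilde{\mu}) \leq M\}$ for some finite $M$. Assumption \ref{assump} — in particular the precompactness of bounded subsets of $\X$ noted in Remark \ref{rem:BoundedSetsCompact} together with the coercivity of $c$ — makes $\mathcal{Q}_M$ tight, hence weakly compact by Prokhorov's theorem. Sion's minimax theorem applied on $\mathcal{F} \times \mathcal{Q}_M$ then yields
\[
\inf_{f \in \mathcal{F}} \sup_{\widetilde{\mu} \in \mathcal{Q}_M} \Phi(f,\widetilde{\mu}) \;=\; \sup_{\widetilde{\mu} \in \mathcal{Q}_M} \inf_{f \in \mathcal{F}} \Phi(f,\widetilde{\mu}) \;=:\; V.
\]
For fixed $\widetilde{\mu}$, the inner infimum is the unrobust Bayes problem against $\widetilde{\mu}$ and admits, via Remark \ref{rem:01Loss}, a hard Borel solution obtained from an arg-max partition of Radon–Nikodym densities $\tfrac{d\widetilde{\mu}_i}{d\sigma}$ with $\sigma := \sum_j \widetilde{\mu}_j$. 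The map $\widetilde{\mu} \mapsto \inf_f \Phi(f,\widetilde{\mu})$ is then concave and weakly upper semi-continuous, so by the direct method on the weakly compact $\mathcal{Q}_M$ it attains its supremum at some $\widetilde{\mu}^* \in \mathcal{Q}_M$.

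Second — and this is the principal technical step — I would construct $f^*$ as a Borel measurable classifier from $\widetilde{\mu}^*$ in such a way that $(f^*,\widetilde{\mu}^*)$ forms a saddle point. Rather than reading $f^*$ off from abstract Radon–Nikodym representatives, I would appeal to the existence of Kantorovich-type dual potentials for $C(\mu,\widetilde{\mu}^*)$ developed in Section \ref{section : existence dual potential}. These potentials, being $c$-transforms of lower semi-continuous functions, are Borel measurable, and the complementary slackness relations at optimality provide an explicit Borel partition $\{A_1^*,\dots,A_K^*\}$ of $\X$. Setting $f^* := (\mathds{1}_{A_1^*},\dots,\mathds{1}_{A_K^*}) \in \mathcal{F}$ produces the required hard, Borel classifier, and the dual-potential identities encode both the Bayes-optimality of $f^*$ against $\widetilde{\mu}^*$ and the optimality of $\widetilde{\mu}^*$ against $f^*$.

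Finally, the saddle-point inequalities \eqref{eq: saddle_point_inequality} follow from this construction: the right inequality holds because $f^*$ minimizes $\Phi(\cdot,\widetilde{\mu}^*)$ (hence $\Phi(f^*,\widetilde{\mu}^*) = V$), while the left inequality holds because the dual feasibility of the Kantorovich potentials bounds $\Phi(f^*,\widetilde{\mu}) \leq \Phi(f^*,\widetilde{\mu}^*)$ uniformly in $\widetilde{\mu}$. The hard part will be this Borel selection: the naive Bayes construction via Radon–Nikodym derivatives of the \emph{optimal adversarial} measure only yields universal measurability after completing $\sigma$ and does not a priori match the DRO optimizer; bypassing this — by extracting $f^*$ directly from lower semi-continuous OT dual potentials — is exactly the purpose of the preparatory work in Section \ref{section : existence dual potential}, and is the reason the theorem delivers a \emph{Borel} (not merely universally measurable) optimal classifier.
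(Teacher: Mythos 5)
The central gap is your construction of $f^*$. You read $f^*$ off from Kantorovich dual potentials of the transport problems $C(\mu_i,\widetilde{\mu}_i^*)$ and assert that these potentials, ``being $c$-transforms of lower semi-continuous functions, are Borel measurable.'' That is exactly the point that fails when $c$ is only lower semi-continuous (e.g.\ $c_\varepsilon$ from \eqref{def:CostEpsilon}) and $\spt(\mu_i)$ is uncountable: the relevant envelope is a supremum (or infimum) over an uncountable set of a merely measurable potential shifted by $c$, and such envelopes are in general only universally measurable; no complementary-slackness identity removes this obstruction, which is precisely what the theorem is supposed to overcome. The paper's proof goes around it differently: approximate $c$ monotonically from below by bounded Lipschitz costs $c_n$ (Lemma \ref{lem:ApproximationCost}), take continuous optimal dual potentials $g^n$ for \eqref{eq:mot_decomposed_dual} with cost $c_n$ (Theorem \ref{thm:learner_part}), define $f^n$ by \eqref{eq:f_i*} so that each $f^n_i$ is lower semi-continuous and hence Borel, pass to a weak$^*$ limit $g^*$ via Proposition \ref{prop:optimality_weak_convergence_dual}, set $f^*_i=\limsup_n f^n_i$ (Borel), and verify $R(f^*,\widetilde{\mu})-C(\mu,\widetilde{\mu})\leq R^*_{DRO}$ directly by a reverse Fatou argument. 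A second structural problem: you force $f^*$ to be a hard classifier $(\mathds{1}_{A_1^*},\dots,\mathds{1}_{A_K^*})$. For $K>2$ the hard and soft problems need not be equivalent (see the discussion following Corollary \ref{cor:BinaryCase}), so no hard classifier need solve \eqref{def:DRO_model} at all; moreover, being a Bayes minimizer of $\Phi(\cdot,\widetilde{\mu}^*)$ is not sufficient to be part of a saddle point --- the whole difficulty is selecting the \emph{right} minimizer, which is what formula \eqref{eq:f_i*} does.

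Two further issues. In the Sion step, with $\mathcal{Q}_M$ weakly compact you need $\widetilde{\mu}\mapsto\Phi(f,\widetilde{\mu})$ to be weakly upper semi-continuous for \emph{every} fixed Borel $f$; but $\widetilde{\mu}_i\mapsto\int_{\X} f_i\,d\widetilde{\mu}_i$ is not weakly lower semi-continuous for a general bounded Borel $f_i$ (only for lower semi-continuous $f_i$), so the hypothesis fails as stated. The minimax identity and the existence of $\widetilde{\mu}^*$ should instead be imported from the duality with the generalized barycenter problem (Theorem \ref{thm:adversary_part} and Corollary 32 of \cite{trillos2023multimarginal}), which is what the paper does. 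Finally, dual potentials of the $K$ separate problems $C(\mu_i,\widetilde{\mu}_i^*)$ do not automatically assemble into an element of $\mathcal{F}$: the feasibility constraints $f_i\geq 0$ and $\sum_{i}f_i\leq 1$ couple the classes, and it is the dual of the barycenter/MOT formulation (\eqref{eq:barycenter_dual}, \eqref{eq:mot_decomposed_dual}), not per-class Kantorovich duality, that produces potentials respecting this coupling.
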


When the cost function $c$ is regular enough or when $\mu$ is an empirical measure, we can reduce the problem of finding a solution $f^*$ of \eqref{def:DRO_model} to the problem of solving the dual of a \textit{generalized barycenter problem} or the dual of a \textit{multimarginal optimal transport} problem. These connections were first put forward in our earlier work \cite{trillos2023multimarginal} and will be discussed again in section \ref{section : existence dual potential}, concretely in Proposition \ref{prop:OptimalPair}. Unfortunately, when the cost is only lower semi-continuous (e.g., for $c=c_\varepsilon$ as in \eqref{def:CostEpsilon}) and when $\mu$ is an arbitrary Borel probability measure, we can not directly use the content of Proposition \ref{prop:OptimalPair} to guarantee the existence of (Borel) solutions $f^*$. To overcome this issue, we approximate $c$ with a sequence of continuous costs $c_n$ such that the previous theory applies. We then show that in the limit the Borel measurability of the optimal classifier is preserved. At a high level, we can thus reduce finding solutions for the DRO problem \eqref{def:DRO_model} to that of an MOT or a generalized barycenter (or sequences thereof).

\begin{remark}
When the cost $c$ has the form $c_\varepsilon$ in \eqref{def:CostEpsilon}, Assumption \ref{assump} reduces to the requirement that bounded subsets in $\X$ are precompact, which we are anyway assuming in Assumption \ref{assump}, according to remark \ref{rem:BoundedSetsCompact}. This is the case for Euclidean space or for a smooth manifold of finite dimension endowed with its geodesic distance.
\end{remark}

In order to discuss the existence of solutions to the problem \eqref{def:closed-ball_model} we actually first need to modify the problem and define it properly. To do this, we first introduce the \textit{universal} $\sigma$-algebra of the space $\X$. 

\begin{definition}[Definition 2.2 in \cite{Nishiura}]
Let $\mathcal{B}(\X)$ be the Borel $\sigma$-algebra over $\X$ and let $\mathcal{M}(\X)$ be the set of all signed $\sigma$-finite Borel measures over $\X$. For each $\nu \in \mathcal{M}(\mathcal{X})$, let $\mathcal{L}_{\nu}(\mathcal{X})$ be the completion of $\mathcal{B}(\mathcal{X})$ with respect to $\nu$. The universal $\sigma$-algebra of $\X$ is defined as
\begin{equation*}
    \mathcal{U}(\mathcal{X}):= \bigcap_{\nu \in \mathcal{M}(\mathcal{X})} \mathcal{L}_{\nu}(\mathcal{X}).
\end{equation*}

We will use $\overline{\mathcal{P}}(\mathcal{Z})$ to denote the set of probability measures $\gamma$ over $\Z$ for which  $\gamma_i$ is a universal positive measure (i.e., it is defined over $\mathcal{U}(\X)$) for all $i \in \Y$. For a given probability measure $\mu \in \mathcal{P}(\Z)$ we will denote by $\overline{\mu}$ its universal extension, which we will interpret as  
\[ \overline{\mu}(A \times \{ i \}) := \overline{\mu}_i(A), \quad \forall A \in \mathcal{U}(\X), \]
where $\overline{\mu}_i$ is the extension of $\mu_i$ to $\mathcal{U}(\X)$. Finally, we will use $\overline{\mathcal{U}}(\Z)$ to denote the set of all $f=(f_1, \dots, f_K)$ for which each $f_i$ is universally measurable.
\end{definition}
\begin{remark}
    If $(\mathcal{X}, d) = (\mathbb{R}^n, || \cdot ||)$, then $\mathcal{U}(\X)$ is the set of all Lebesgue measurable sets; see Theorem 4.2 in \cite{Nishiura}. So, any Lebesgue-measurable function is universally measurable and vice-versa.
\end{remark}

Having introduced the above notions, we can reformulate problem \eqref{def:closed-ball_model} as:
\begin{equation}\label{def:closed-ball_model-corrected}
    \overline{R}_{\varepsilon} := \inf_{f \in \mathcal{F}} \overline{R}_{\varepsilon}(f) := \inf_{f \in \mathcal{F}} \left\{ \sum_{i \in \Y} \int_{\mathcal{X}} \sup_{\widetilde{x} \in \overline{B}_{\varepsilon}(x)} \{ 1 - f_i(\widetilde{x}) \} d\overline{\mu}_i(x) \right\}.
\end{equation}
Although the difference with \eqref{def:closed-ball_model} is subtle (in \eqref{def:closed-ball_model} we use $\mu_i$ whereas in \eqref{def:closed-ball_model-corrected} we use $\overline{ \mu_i}$), problem \eqref{def:closed-ball_model-corrected} is actually well-defined. Indeed, combining Lemma 4.2 in \cite{pydi2021the} with Corollary 7.42.1 in \cite{bertsekas1996stochastic}, originally from \cite{luzin1919quelques}, it follows that for any Borel measurable $f_i$ the function $x \mapsto \sup_{x \in \overline{B}_\varepsilon(x)} \{1- f_i(\tilde x) \}$ is universally measurable and thus the integrals on the right hand side of \eqref{def:closed-ball_model-corrected} are well defined. 

Our second main result relates solutions of \eqref{def:DRO_model} with solutions of \eqref{def:closed-ball_model-corrected}.

\begin{theorem}\label{prop : borel_general_closed_ball}
There exists a Borel solution of \eqref{def:DRO_model} for the cost function $c= c_\varepsilon$ from \eqref{def:CostEpsilon} that is also a solution of \eqref{def:closed-ball_model-corrected}. In particular, there exists a (Borel) solution for \eqref{def:closed-ball_model-corrected}.  
\end{theorem}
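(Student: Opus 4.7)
The plan is to use the saddle point $(f^*,\widetilde{\mu}^*)$ produced by Theorem \ref{thm : measurable robust classifier} for the cost $c = c_\varepsilon$ from \eqref{def:CostEpsilon} and show that the Borel classifier $f^*$ also solves \eqref{def:closed-ball_model-corrected}. Since $c_\varepsilon \in \{0,\infty\}$, the OT cost $C(\mu,\widetilde\mu)$ takes only the values $0$ and $\infty$; testing the saddle inequality \eqref{eq: saddle_point_inequality} against the admissible choice $\widetilde\mu=\mu$ forces $C(\mu,\widetilde{\mu}^*)=0$, so
\[ R_{DRO}^* \;=\; R(f^*,\widetilde{\mu}^*) \;=\; \sup\bigl\{\,R(f^*,\widetilde{\mu}) : \widetilde{\mu}\in\mathcal{P}(\mathcal{Z}),\ C(\mu,\widetilde{\mu})=0\,\bigr\}. \]
The target identity is $R_{DRO}^* = \overline{R}_\varepsilon(f^*)$; combined with the trivial chain $R_{DRO}^* \le \overline{R}_\varepsilon \le \overline{R}_\varepsilon(f^*)$, this simultaneously yields $\overline{R}_\varepsilon = R_{DRO}^*$ and the optimality of $f^*$ for \eqref{def:closed-ball_model-corrected}.

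For the easy direction $R(f,\widetilde\mu)\le\overline{R}_\varepsilon(f)$ on any admissible pair $(f,\widetilde\mu)$, any coupling $\pi_i\in\Gamma(\mu_i,\widetilde\mu_i)$ witnessing $C(\mu_i,\widetilde\mu_i)=0$ is concentrated on $\{d\le\varepsilon\}$. Disintegrating $\pi_i$ against its first marginal produces probability measures $\pi_{i,x}$ supported in $\overline{B}_\varepsilon(x)$ for $\mu_i$-a.e.\ $x$, and
\[ \int(1-f_i(\widetilde x))\,d\widetilde\mu_i(\widetilde x) = \int\!\!\int(1-f_i(\widetilde x))\,d\pi_{i,x}(\widetilde x)\,d\mu_i(x) \le \int\sup_{\widetilde x\in\overline{B}_\varepsilon(x)}(1-f_i(\widetilde x))\,d\overline\mu_i(x), \]
the right-hand integral being well-defined through the universal measurability of the supremum recalled in the discussion following \eqref{def:closed-ball_model-corrected}. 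Summing over $i$ and taking the sup over admissible $\widetilde\mu$ gives $R_{DRO}(f)\le\overline{R}_\varepsilon(f)$, and hence $R_{DRO}^*\le\overline{R}_\varepsilon$.

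The main work, and the step I expect to be most delicate, is the matching bound $\overline{R}_\varepsilon(f^*)\le R_{DRO}^*$. Given $\eta>0$, I will construct for each $i\in\mathcal{Y}$ a universally measurable selector $s_i:\mathcal{X}\to\mathcal{X}$ with $s_i(x)\in\overline{B}_\varepsilon(x)$ and
\[ 1-f^*_i(s_i(x)) \;\ge\; \sup_{\widetilde x\in\overline{B}_\varepsilon(x)}(1-f^*_i(\widetilde x)) - \eta, \qquad \forall x\in\mathcal{X}. \]
Existence of such a near-optimal selector is a standard consequence of the measurable maximum theorem applied to the compact-valued multifunction $x\mapsto\overline{B}_\varepsilon(x)$ (with compactness ensured by Remark \ref{rem:BoundedSetsCompact}) and the Borel integrand $1-f^*_i$; it is the selection counterpart of Corollary 7.42.1 of \cite{bertsekas1996stochastic} already invoked above for the value function. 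The key measurability bookkeeping is that preimages of Borel sets under $s_i$ are universally measurable, so $\overline\mu_i\circ s_i^{-1}$ restricts to a Borel positive measure $\widetilde\mu_i$ with $\widetilde\mu_i(\mathcal{X})=\mu_i(\mathcal{X})$, and the formula $\pi_i(C):=\overline\mu_i(\{x:(x,s_i(x))\in C\})$ on Borel $C\subseteq\mathcal{X}\times\mathcal{X}$ defines a Borel coupling in $\Gamma(\mu_i,\widetilde\mu_i)$ concentrated on $\{d\le\varepsilon\}$. Hence $C(\mu,\widetilde\mu)=0$ for the resulting $\widetilde\mu\in\mathcal{P}(\mathcal{Z})$, and the change-of-variables identity (valid since $1-f^*_i$ is Borel) together with $\sum_i\mu_i(\mathcal{X})=1$ gives
\[ R(f^*,\widetilde\mu) \;=\; \sum_i\int(1-f^*_i(s_i(x)))\,d\overline\mu_i(x) \;\ge\; \overline{R}_\varepsilon(f^*) - \eta. \]
Combining with the saddle-point bound $R(f^*,\widetilde\mu) = R(f^*,\widetilde\mu)-C(\mu,\widetilde\mu)\le R_{DRO}^*$ and sending $\eta\to 0$ closes the argument. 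The technical subtlety concentrated in this last step is precisely the one the statement is designed to overcome: the measurable selection theorem only produces a universally measurable selector $s_i$, and the pushforward/coupling construction just described is what converts it into a genuinely Borel competitor $\widetilde\mu\in\mathcal{P}(\mathcal{Z})$ usable in the DRO formulation \eqref{def:DRO_model}.
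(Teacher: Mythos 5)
Your argument is correct, but it proves the theorem by a genuinely different route than the paper. The paper never constructs adversarial competitors at all: it works on the dual side, using Proposition \ref{prop : meaurable version} to identify $\sum_i \int \inf_{\tilde x}\{f^*_i(\tilde x)+c(x,\tilde x)\}\,d\overline{\mu}_i$ with $\sum_i\int g_i^*\,d\mu_i$, and then showing that any hypothetical better competitor $\widehat f$ for the closed-ball problem would, via Borel versions of the $c$-transforms $\inf_{\tilde x}\{\widehat f_i(\tilde x)+c(x,\tilde x)\}$ (Lemma \ref{lemma : Universally measurable functions}), produce a feasible point of \eqref{eq:mot_decomposed_dual_Linfty} with strictly larger value, contradicting the optimality of $g^*$ from Proposition \ref{prop:optimality_weak_convergence_dual}; this argument works for every cost satisfying Assumption \ref{assump}, not just $c_\varepsilon$. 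You instead compare values directly, $R^*_{DRO}\le \overline{R}_\varepsilon\le \overline{R}_\varepsilon(f^*)\le R^*_{DRO}$: the first inequality by disintegrating zero-cost couplings (fine), and the last by building, from a universally measurable $\eta$-optimal selector $s_i$ with $s_i(x)\in\overline{B}_\varepsilon(x)$, a genuinely Borel competitor $\widetilde\mu$ with $C(\mu,\widetilde\mu)=0$ and $R(f^*,\widetilde\mu)\ge \overline{R}_\varepsilon(f^*)-\eta$, then invoking the saddle point from Theorem \ref{thm : measurable robust classifier}. What your approach buys is independence from Proposition \ref{prop : meaurable version} and from the dual potentials $g^*$ altogether, plus the explicit identity $\overline{R}_\varepsilon=R^*_{DRO}$; what it costs is reliance on a nontrivial descriptive-set-theoretic selection theorem and on pushing $\overline{\mu}_i$ forward through a merely universally measurable map (which you correctly observe still yields Borel measures when restricted to $\mathfrak{B}(\X)$).

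One citation should be repaired, though it is not a gap in substance: the ``measurable maximum theorem'' for the multifunction $x\mapsto \overline{B}_\varepsilon(x)$ requires a Carath\'eodory integrand (continuous in $\tilde x$), which $1-f^*_i$ is not, and since $f^*_i$ is only Borel an exactly optimal selector need not exist even though the balls are compact. The correct tool is the universally (analytically) measurable $\eta$-optimal selection theorem for lower semianalytic integrands over analytic constraint sets, e.g.\ Proposition 7.50 in \cite{bertsekas1996stochastic}, which is indeed the selection counterpart of the Corollary 7.42.1 you mention; with that reference your construction of $s_i$, and hence the whole proof, goes through.
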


Finally, we connect problem \eqref{def:closed-ball_model-corrected} with problem \eqref{def:open-ball_model}. 

\begin{theorem}
\label{thm:AllmodelsSame}
For all but at most countably many $\varepsilon \geq 0$, we have
$  R_\varepsilon^o = \overline{R}_{\varepsilon}$. Moreover, for those $\varepsilon \geq 0$ for which this equality holds, every solution $f^*$ of \eqref{def:closed-ball_model-corrected} is also a solution of \eqref{def:open-ball_model}.
\end{theorem}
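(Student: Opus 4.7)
The plan is to sandwich $\overline{R}_{\varepsilon}$ between $R^o_{\varepsilon}$ and the right limit of $R^o_\varepsilon$ as $\varepsilon$ varies, and then use monotonicity of $\varepsilon \mapsto R^o_\varepsilon$ to rule out discontinuities except on a countable set.

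First, I would establish the elementary inclusions between balls: for every $x \in \X$ one has $B_\varepsilon(x) \subseteq \overline{B}_\varepsilon(x)$, and for every $\varepsilon' > \varepsilon$ one has $\overline{B}_\varepsilon(x) \subseteq B_{\varepsilon'}(x)$. Taking suprema of $1 - f_i(\tilde x)$ over these nested sets, and using that $\overline{\mu}_i$ agrees with $\mu_i$ on Borel sets (so the integral of a Borel integrand against $\overline{\mu}_i$ coincides with its integral against $\mu_i$), I obtain the pointwise chain
\begin{equation*}
    R^o_\varepsilon(f) \;\leq\; \overline{R}_\varepsilon(f) \;\leq\; R^o_{\varepsilon'}(f), \qquad \forall f \in \mathcal{F},\ \forall \varepsilon' > \varepsilon.
\end{equation*}
Taking infima over $f \in \mathcal{F}$ gives
\begin{equation*}
    R^o_\varepsilon \;\leq\; \overline{R}_\varepsilon \;\leq\; \inf_{\varepsilon' > \varepsilon} R^o_{\varepsilon'}.
\end{equation*}

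Next, I would note that $\varepsilon \mapsto R^o_\varepsilon$ is nondecreasing (by the same ball inclusion applied to the open balls themselves) and bounded above by $1 = \mu(\Z)$, so it has at most countably many points of discontinuity. At every $\varepsilon \geq 0$ where this monotone function is right-continuous, the right-hand side of the last display collapses to $R^o_\varepsilon$, forcing $R^o_\varepsilon = \overline{R}_\varepsilon$. This yields the first claim on a cocountable set of budgets.

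For the statement about minimizers, fix such an $\varepsilon$ and let $f^*$ solve \eqref{def:closed-ball_model-corrected}. Then the pointwise inequality above applied to $f^*$ gives
\begin{equation*}
    R^o_\varepsilon(f^*) \;\leq\; \overline{R}_\varepsilon(f^*) \;=\; \overline{R}_\varepsilon \;=\; R^o_\varepsilon,
\end{equation*}
so $f^*$ is also a minimizer of \eqref{def:open-ball_model}, as claimed.

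I do not expect any serious obstacle here: the only nontrivial input is Theorem \ref{prop : borel_general_closed_ball}, which guarantees that \eqref{def:closed-ball_model-corrected} admits a Borel minimizer (so that the statement about minimizer transfer is non-vacuous). The potentially subtle point worth spelling out carefully in the write-up is that in the sandwich inequality the left endpoint involves an integral against $\mu_i$ while the middle one involves an integral against $\overline{\mu}_i$; this is harmless because both integrands on the left side are Borel (arbitrary unions of open balls are open, as recalled in the text preceding Theorem \ref{prop : borel_general_closed_ball}), and $\overline{\mu}_i$ restricts to $\mu_i$ on $\mathfrak{B}(\X)$.
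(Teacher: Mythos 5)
Your proposal is correct and follows essentially the same route as the paper's proof: the ball-inclusion sandwich $R^o_\varepsilon(f) \leq \overline{R}_\varepsilon(f) \leq R^o_{\varepsilon+\delta}(f)$, monotonicity of $\varepsilon \mapsto R^o_\varepsilon$ giving at most countably many discontinuities, and the chain $R^o_\varepsilon(f^*) \leq \overline{R}_\varepsilon(f^*) = \overline{R}_\varepsilon = R^o_\varepsilon$ for the minimizer transfer. Your explicit remark that the Borel integrand on the open-ball side lets one replace $\overline{\mu}_i$ by $\mu_i$ is a detail the paper leaves implicit, but it is not a different argument.
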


\begin{remark}
In general, we can not expect the optimal adversarial risks of open-ball and closed-ball models to agree for all values of $\varepsilon$. To illustrate this, consider the simple setting of a two class problem (i.e., $K=2$) where $\mu_1 = \frac{1}{2}\delta_{x_1}$ and $\mu_2 = \frac{1}{2}\delta_{x_2}$. Let $\epsilon_0 = \frac{1}{2}d(x_1, x_2)$. It is straightforward to check that $R^o_{\epsilon_0} = 0$ whereas $\overline{R}_{\epsilon_0} =1/2$. Naturally, if we had selected any other value for $\varepsilon>0$ different from $\epsilon_0$ we would have obtained $R^o_{\varepsilon} = \overline{R}_{\varepsilon}$. 
\end{remark}

From Theorem \ref{thm : measurable robust classifier}, Proposition \ref{prop : borel_general_closed_ball}, and Theorem \ref{thm:AllmodelsSame} we may conclude that it is essentially sufficient to solve problem \eqref{def:DRO_model} to find a solution for all other formulations of the adversarial training problem discussed in this paper. Our results thus unify all notions of adversarial robustness into the single DRO problem \eqref{def:DRO_model}. The advantage of \eqref{def:DRO_model} over the other formulations of the adversarial training problem is that it can be closely related to a generalized barycenter problem or an MOT problem, as has been discussed in detail in our previous work \cite{trillos2023multimarginal} (see also section \ref{section : existence dual potential} below). In turn, either of those problems can be solved using computational optimal transport tools. From a practical perspective, it is thus easier to work with the DRO formulation than with the other formulations of adversarial training.

\subsection{Discussion and literature review}
\label{sec:Discussion}

The existence of measurable ``robust" solutions to optimization problems has been a topic of interest not only in the context of adversarial training \cite{pydi2021the, frank2022consistency, frank2022existence, awasthi2021existence, awasthi2021extended} but also in the general \textit{distributionally robust optimization} literature, e.g.,  \cite{MR3959085}. Previous studies of robust classifiers use \textit{the universal $\sigma$-algebra} not only to formulate optimization problems rigorously, but also as a feasible search space for robust classifiers. The proofs of these existence results rely on the pointwise topology of a sequence of universally measurable sets, the weak topology on the space of probability measures, and  lower semi-continuity properties of $\overline{R}_{\varepsilon}(\cdot)$. The (universal) measurability of a minimizer is then guaranteed immediately by the definition of the universal $\sigma$-algebra. We want to emphasize that all the works \cite{pydi2021the, frank2022consistency, frank2022existence, awasthi2021existence, awasthi2021extended} prove their results in the binary ($K=2$) classification setting where $\X$ is a subset of Euclidean space.

In contrast to the closed-ball model formulation, the objective 
in \eqref{def:DRO_model} is well-defined for all Borel probability measures $\widetilde \mu$ and all $f\in \F$, as has been discussed in previous sections. The papers \cite{pydi2021the, frank2022consistency, frank2022existence, awasthi2021existence, awasthi2021extended} can only relate, in the binary case, problems \eqref{def:DRO_model} and \eqref{def:closed-ball_model-corrected} when problem \eqref{def:DRO_model} is appropriately extended to the universal $\sigma$-algebra, yet it is not clear that such extension is necessary. For concreteness, we summarize some of the results in those works in the following theorem.
\begin{theorem}[\cite{pydi2021the, awasthi2021existence, awasthi2021extended, frank2022existence}]
Suppose $K=2$ and $\overline{\mu} \in \overline{\mathcal{P}}(\mathcal{Z})$. Then, for any $f \in \mathcal{F}$, we have $\sup_{\widetilde{x} \in \overline{B}_{\varepsilon}(x)} \{ 1 - f_i(\widetilde{x}) \} \in \mathcal{U}(\mathcal{Z})$ and
\begin{equation*}
    \sum_{i=1}^2 \int_{\mathcal{X}} \sup_{\widetilde{x} \in \overline{B}_{\varepsilon}(x)} \{ 1 - f_i(\widetilde{x}) \} d\overline{\mu}_i(x) = \sup_{\widetilde{\mu} \in \overline{\mathcal{P}}(\mathcal{Z})} \left\{ R(f, \widetilde{\mu}) - C(\overline{\mu}, \widetilde{\mu}) \right\},
\end{equation*}
where $C$ is defined in terms of the cost $c_\varepsilon$ from \eqref{def:CostEpsilon}.

Assume further that $(\mathcal{X}, d) = (\mathbb{R}^n, || \cdot ||)$. Then, it holds that $\sup_{\widetilde{x} \in \overline{B}_{\varepsilon}(\cdot )} \{ 1 - f_i(\widetilde{x}) \} $ is universally measurable for any $f \in \mathcal{U}(\mathcal{Z})$ and each $i$. In addition, there exists a minimizer of the objective in \eqref{def:closed-ball_model-corrected} in the class of soft-classifiers that are universally measurable. Finally, \eqref{def:closed-ball_model-corrected} and \eqref{def:DRO_model} are equivalent, provided that the latter is interpreted as an optimization problem over the space of universally measurable soft-classifiers.  
\end{theorem}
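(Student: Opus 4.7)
The statement contains four pieces: (i) universal measurability of $x\mapsto \sup_{\tilde x\in\overline B_\varepsilon(x)}(1-f_i(\tilde x))$ for Borel $f$; (ii) the duality identity relating the closed-ball integral to the DRO supremum over $\overline{\mathcal P}(\Z)$; (iii) the analogous measurability statement in Euclidean space for universally measurable $f$, together with existence of a universally measurable minimizer of $\overline R_\varepsilon$; and (iv) the equivalence between \eqref{def:closed-ball_model-corrected} and the universal-$\sigma$-algebra extension of \eqref{def:DRO_model}. The plan is to prove (i) via a projection-of-Borel-sets argument, (ii) by a matched pair of optimal-transport and measurable-selection estimates, and (iii)--(iv) by combining an upgraded version of (i) with a direct-method compactness argument. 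Note that $K=2$ plays no logical role in (i)--(ii); it only enters at the existence step, where the binary structure allows one to parametrize soft-classifiers by a single universally measurable function $\phi := f_1 \in [0,1]$, simplifying the compactness setup.

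For (i), observe that for every $t\in\R$
\[ \{ x : \sup_{\tilde x\in \overline B_\varepsilon(x)} (1-f_i(\tilde x)) > t \} \;=\; \pi_1\bigl(\{(x,\tilde x)\in\X\times\X : d(x,\tilde x)\leq \varepsilon,\ f_i(\tilde x)<1-t\}\bigr). \]
The set inside the projection is Borel in $\X\times\X$, so its projection is analytic and hence universally measurable by the Lusin--Sierpi\'nski theorem (Corollary 7.42.1 of \cite{bertsekas1996stochastic}). Thus the sup-function lies in $\mathcal U(\X)$ and the integral in \eqref{def:closed-ball_model-corrected} is well-defined.

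For (ii), the direction $\geq$ is immediate: any $\widetilde\mu\in\overline{\mathcal P}(\Z)$ with $C(\overline\mu,\widetilde\mu)<\infty$ admits, for each $i$, a coupling $\pi_i\in\Gamma(\overline\mu_i,\widetilde\mu_i)$ concentrated on $\{d(x,\tilde x)\leq\varepsilon\}$ (where $c_\varepsilon\equiv 0$), and
\[ \int(1-f_i(\tilde x))\,d\widetilde\mu_i(\tilde x) \;=\; \int(1-f_i(\tilde x))\,d\pi_i(x,\tilde x) \;\leq\; \int \sup_{\tilde x\in\overline B_\varepsilon(x)}(1-f_i(\tilde x))\,d\overline\mu_i(x); \]
summing over $i$ and subtracting the nonnegative transport cost gives the bound. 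For the reverse direction, fix $\eta>0$ and invoke the Jankov--von Neumann selection theorem (Proposition 7.50 of \cite{bertsekas1996stochastic}) to produce a universally measurable $T_i\colon \X\to\X$ with $T_i(x)\in\overline B_\varepsilon(x)$ and $1-f_i(T_i(x))\geq \sup_{\tilde x\in\overline B_\varepsilon(x)}(1-f_i(\tilde x))-\eta$. Setting $\widetilde\mu_i:=(T_i)_{\#}\overline\mu_i$ yields a competitor with $C(\overline\mu_i,\widetilde\mu_i)=0$ (via the coupling $(\mathrm{id}\times T_i)_{\#}\overline\mu_i$) whose risk differs from the integral by at most $\eta$; letting $\eta\to 0$ closes the gap.

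For (iii) and (iv), the Euclidean hypothesis identifies $\mathcal U(\X)$ with the Lebesgue $\sigma$-algebra, and one reruns the projection argument of (i) starting from a universally measurable $f_i$, using the fact that projections of analytic sets remain analytic and hence universally measurable. Existence of a universally measurable minimizer of $\overline R_\varepsilon$ then proceeds by the direct method applied to a minimizing sequence $\{\phi^n\}\subset [0,1]^\X$ (exploiting $K=2$ to reduce to a single scalar classifier): extract a subsequential pointwise $\overline\mu$-a.e.\ limit $\phi^*$ via a Koml\'os/diagonal argument, and verify $\overline R_\varepsilon(\phi^*)\leq \liminf_n \overline R_\varepsilon(\phi^n)$ by Fatou together with a monotone approximation of $\sup_{\overline B_\varepsilon(x)}$ by sup's over countable dense subnets. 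The equivalence (iv) is then a formal consequence of (ii), now valid for all universally measurable $f$, combined with the observation that enlarging the feasible set from $\mathcal P(\Z)$ to $\overline{\mathcal P}(\Z)$ can only increase the DRO supremum. The main obstacle is the lower-semicontinuity step in (iii): the sup-integrand is not automatically l.s.c.\ under pointwise convergence, which is precisely the reason existence has to be claimed on the enlarged class $\overline{\mathcal U}(\Z)$ rather than on the original $\F$, and also the reason the present authors introduce an entirely different route via the DRO problem in Theorem~\ref{prop : borel_general_closed_ball} to recover a genuinely Borel minimizer.
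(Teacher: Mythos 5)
First, a point of comparison: the paper does not prove this statement at all. It is explicitly a summary of results imported from \cite{pydi2021the, awasthi2021existence, awasthi2021extended, frank2022existence}, so there is no internal proof to measure you against; your sketch has to stand on its own against the arguments in those works. Your parts (i) and (ii) do: the superlevel-set/projection argument (Borel set in $\X\times\X$, analytic projection, universal measurability via Corollary 7.42.1 of \cite{bertsekas1996stochastic}) is exactly the mechanism the paper itself invokes to make \eqref{def:closed-ball_model-corrected} well-defined, and the two-sided estimate for the duality identity, with the upper bound obtained from a zero-cost coupling and the lower bound from a Jankov--von Neumann $\eta$-optimal universally measurable selector $T_i$ pushed forward against $\overline{\mu}_i$, is the standard argument of \cite{pydi2021the}. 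Your observation that $K=2$ is not logically needed there is also correct.

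Parts (iii) and (iv), however, contain genuine gaps. (a) You propose to ``rerun the projection argument'' for universally measurable $f_i$, ``using the fact that projections of analytic sets remain analytic.'' But when $f_i$ is only universally (equivalently, Lebesgue) measurable, the set $\{(x,\widetilde{x}): d(x,\widetilde{x})\leq\varepsilon,\ f_i(\widetilde{x})<1-t\}$ is universally measurable, not Borel and not analytic, and projections of universally measurable sets need not be universally measurable; so the argument of (i) does not transfer. This is precisely why the second half of the theorem assumes $(\X,d)=(\mathbb{R}^n,\lVert\cdot\rVert)$: the cited proofs exploit Euclidean structure (e.g.\ that an arbitrary union of closed balls of fixed positive radius in $\mathbb{R}^n$ is Lebesgue measurable, so closed $\varepsilon$-dilations of arbitrary sets are measurable, or a decomposition of a Lebesgue set into an $F_\sigma$ set plus a null set), not an analytic-set argument, and your use of the Euclidean hypothesis only to identify $\mathcal{U}(\X)$ with the Lebesgue $\sigma$-algebra does not supply this. (b) The existence step is not a proof as written: the map $\phi\mapsto\sup_{\widetilde{x}\in\overline{B}_\varepsilon(x)}\{1-\phi(\widetilde{x})\}$ is not defined on $\overline{\mu}$-a.e.\ equivalence classes (modifying $\phi^n$ on null sets changes the sup), so extracting an a.e.\ limit by Koml\'os and invoking Fatou with ``monotone approximation by sup's over countable dense subnets'' cannot deliver the needed lower semicontinuity; you name this obstacle yourself but do not resolve it, and resolving it is exactly the content of the cited works (limsup-of-dilated-sets arguments in \cite{awasthi2021existence, awasthi2021extended}, duality/minimax arguments in \cite{pydi2021the, frank2022existence}). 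Finally, (iv) is indeed a formal consequence of the duality identity, but only once that identity is established for universally measurable $f$, which is the unproved step (a); so as it stands the second half of the statement remains unjustified by your proposal.
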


In this paper, we use the universal $\sigma$-algebra to rigorously define the objective function in \eqref{def:closed-ball_model-corrected}, but we will only consider elements in $\F$ (thus, Borel measurable soft-classifiers) as feasible classifiers. Indeed, based on some of our previous results in \cite{trillos2023multimarginal}, we prove the existence of Borel measurable robust classifiers of \eqref{def:DRO_model} for general lower semi-continuous $c$ satisfying Assumption \ref{assump:CostStructure} only. Then, back to the closed-ball model, we prove the existence of Borel robust classifiers of \eqref{def:closed-ball_model-corrected}. When we specialize our results to the binary classification setting (i.e., $K=2$), we obtain the following improvement upon the results from \cite{Bhagoji2019LowerBO,Pydi2021AdversarialRV,frank2022existence}.



\begin{corollary}
\label{cor:BinaryCase}
Let $K=2$ and let $f^* \in \F$ be any solution to the problem \eqref{def:closed-ball_model-corrected}. Then, for Lebesgue a.e. $t \in [0,1]$, the pair $(\mathds{1}_{\{ f^*_1 \geq t \}} , \mathds{1}_{\{ f_1^* \geq t \}^c})$ is also a solution to \eqref{def:closed-ball_model-corrected}.

In particular, there exist solutions to the problem 
\[ \min_{A \in \mathfrak{B}(\X)}  \int_{\X} \sup_{ \tilde x \in \overline{B}_\varepsilon(x)}  \mathds{1}_{A^c}(\tilde x) d\overline{\mu}_1(x) + \int_{\X} \sup_{ \tilde x \in \overline{B}_\varepsilon(x)}  \mathds{1}_{A}(\tilde x) d\overline{\mu}_2(x).    \]
\end{corollary}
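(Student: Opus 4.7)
The plan is a layer-cake decomposition in the threshold parameter $t$ followed by an averaging argument. By Remark~\ref{rem:01Loss} applied to the adversarial sup-risk $\overline R_\varepsilon$, I may replace $f^*=(f_1^*,f_2^*)$ with $(f_1^*,\,1-f_1^*)$: this leaves the sets $A_t:=\{f_1^*\geq t\}$ unchanged, can only decrease the risk (since $1-f_1^*\geq f_2^*$ pointwise), and therefore is still a minimizer. I thus assume $f_2^*=1-f_1^*$. Each $A_t$ is Borel, so $(\mathds{1}_{A_t},\mathds{1}_{A_t^c})\in\F$, giving $\overline R_\varepsilon(\mathds{1}_{A_t},\mathds{1}_{A_t^c})\geq \overline R_\varepsilon$ for every $t\in[0,1]$.

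The main step is to prove
\[
\int_0^1 \overline R_\varepsilon(\mathds{1}_{A_t},\mathds{1}_{A_t^c})\,dt \;=\; \overline R_\varepsilon(f^*).
\]
Set $M(x):=\sup_{\tilde x\in \overline{B}_\varepsilon(x)} f_1^*(\tilde x)$ and $I(x):=\inf_{\tilde x\in \overline{B}_\varepsilon(x)} f_1^*(\tilde x)$; both are universally measurable by the Luzin--Bertsekas result already cited in the paper. Using $f_2^*=1-f_1^*$, one obtains $\overline R_\varepsilon(\mathds{1}_{A_t},\mathds{1}_{A_t^c})=\overline\mu_1(S_1^t)+\overline\mu_2(S_2^t)$ with
\[
S_1^t:=\{x:\exists\,\tilde x\in \overline{B}_\varepsilon(x),\,f_1^*(\tilde x)<t\},\qquad
S_2^t:=\{x:\exists\,\tilde x\in \overline{B}_\varepsilon(x),\,f_1^*(\tilde x)\geq t\}.
\]
Directly from the definition of infimum, $S_1^t=\{I<t\}$; for $S_2^t$ only the sandwich $\{M>t\}\subset S_2^t\subset\{M\geq t\}$ holds, but the difference lies in $\{M=t\}$, which has positive $\overline\mu_2$-measure for at most countably many $t$ (the pushforward of $\overline\mu_2$ by $M$ is a finite measure on $[0,1]$ and hence has countably many atoms). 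Thus $\overline\mu_2(S_2^t)=\overline\mu_2(\{M>t\})$ for a.e.\ $t$. Both $t\mapsto \overline\mu_1(\{I<t\})$ and $t\mapsto \overline\mu_2(\{M>t\})$ are monotone in $t$, hence Lebesgue measurable, and the scalar layer-cake identities give
\[
\int_0^1\overline\mu_1(\{I<t\})\,dt=\int(1-I)\,d\overline\mu_1=\int \sup_{\tilde x} f_2^*(\tilde x)\,d\overline\mu_1(x),
\]
\[
\int_0^1\overline\mu_2(\{M>t\})\,dt=\int M\,d\overline\mu_2=\int \sup_{\tilde x} f_1^*(\tilde x)\,d\overline\mu_2(x).
\]
Adding these yields $\overline R_\varepsilon(f^*)$, proving the identity.

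Combining the identity with $\overline R_\varepsilon(\mathds{1}_{A_t},\mathds{1}_{A_t^c})\geq \overline R_\varepsilon=\overline R_\varepsilon(f^*)$ for every $t$ forces equality for Lebesgue-a.e.\ $t\in[0,1]$, which is the first assertion. The ``in particular'' statement then follows at once, since the functional displayed in the corollary equals $\overline R_\varepsilon(\mathds{1}_A,\mathds{1}_{A^c})$, so any Borel set $A=A_t$ achieving equality is a minimizer over Borel sets.

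The delicate point is the asymmetry between $\sup$ and $\inf$: $S_2^t$ can be strictly smaller than $\{M\geq t\}$ when the supremum of $f_1^*$ over $\overline{B}_\varepsilon(x)$ is not attained, so the countable-atoms step is genuinely needed to discard an exceptional $t$-null set. Everything else is standard layer-cake and Tonelli, relying only on the universal measurability of $M$ and $I$ already invoked in the paper.
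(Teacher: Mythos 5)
Your proof is correct and follows essentially the same route as the paper: reduce to $f_2^*=1-f_1^*$ via Remark \ref{rem:01Loss}, establish the coarea identity $\overline R_\varepsilon(f_1^*,1-f_1^*)=\int_0^1\overline R_\varepsilon(\mathds{1}_{A_t},\mathds{1}_{A_t^c})\,dt$, and conclude by averaging, with the hard-classifier risk identified with the displayed set functional. The only difference is that you verify the coarea identity in detail (the $\{M=t\}$ countable-atom issue and universal measurability of $M$, $I$), whereas the paper cites it as straightforward from \cite{bungert2023geometry}.
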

Notice that Corollary \ref{cor:BinaryCase} implies, for the binary case, the existence of robust hard-classifiers for the adversarial training problem, a property shared with the nominal risk minimization problem \eqref{eq:UnrobustRiskMinimization} that we discussed at the beginning of section \ref{preminaries}. Analogous results on the equivalence of the hard-classification and soft-classification problems in adversarial training under the binary setting have been obtained in \cite{Pydi2021AdversarialRV, pydi2021the, bungert2023geometry,  Trillos2020AdversarialCN}. Unfortunately, when the number of classes is such that $K>2$, the hard-classification and soft-classification problems in adversarial training may not be equivalent, as has been discussed in Section 5.2 of our work \cite{trillos2023multimarginal}.

In light of Theorem \ref{thm:AllmodelsSame}, one can conclude from Corollary \ref{cor:BinaryCase} that for all but countably many $\varepsilon>0$ the problem 
\[ \min_{A \in \mathfrak{B}(\X)}  \int_{\X} \sup_{ \tilde x \in {B}_\varepsilon(x)}  \mathds{1}_{A^c}(\tilde x) d{\mu}_1(x) + \int_{\X} \sup_{ \tilde x \in {B}_\varepsilon(x)}  \mathds{1}_{A}(\tilde x) d {\mu}_2(x)    \]
admits solutions; notice that the above is the open-ball version of the optimization problem in Corollary \ref{cor:BinaryCase}. However, notice that the results in \cite{bungert2023geometry} guarantee existence of solutions for \textit{all} values of $\varepsilon$. It is interesting to note that the technique used in \cite{bungert2023geometry} can not be easily adapted to the multiclass case $K>2$. Specifically, it does not seem to be straightforward to generalize Lemma C.1 in \cite{bungert2023geometry} to the multiclass case. For example, if one used the aforementioned lemma to modify the coordinate functions $f_i$ of a multiclass classifier $f$, one could end up producing functions for which their sum may be greater than one for some points in $\X$, thus violating one of the conditions for belonging to $\F$.



We observe, on the other hand, that the total variation regularization interpretation for the open ball model in the binary case discussed in \cite{bungert2023geometry} continues to hold in the multiclass case. To make this connection precise, let us introduce the non-local TV functionals:
\begin{equation*} 
\pretv_{\varepsilon}(f_i, \mu_i):= \frac{1}{\varepsilon} \sum_{i \in \Y} \int_{\mathcal{X}} \sup_{\widetilde{x} \in B_{\varepsilon}(x)} \{ f_i(x) -  f_i(\widetilde{x}) \} d\mu_i(x).
\end{equation*}
It is then straightforward to show that problem \eqref{def:open-ball_model} is equivalent to 
\begin{equation}
\inf_{f \in \F}  \sum_{i=1}^K \int_{\X}(1 - f_i(x)) d\mu_i(x)       + 
 \varepsilon \sum_{i=1}^K  \pretv_{\varepsilon}(f_i, \mu_i),
    \label{def: pre_TV}
\end{equation}
which can be interpreted as a total variation minimization problem with fidelity term. Indeed, the fidelity term in the above problems is the nominal (unrobust) risk $R(f, \mu)$. On the other hand,  the functional $ \pretv_{\varepsilon}(\cdot, \mu_i)$ is a non-local total variation functional in the sense that it is convex, positive $1$-homogeneous, invariant under addition of constants to the input function and is equal to zero when its input is a constant function. Moreover, in the case $(\X,d) = (\R^d, \lVert \cdot \rVert)$ and when $d\mu_i(x)=\rho_i(x) dx$ for a smooth function $\rho_i$, one can see that, for small $\varepsilon>0$,
\begin{equation*}
   \pretv_{\varepsilon}(f_i, \mu_i) \approx \int_{\mathcal{X}} \left| \nabla f_i(x) \right| \rho_i(x)dx,
\end{equation*}
when $f_i$ is a smooth enough function. The functional $\pretv_{\varepsilon}(f_i, \mu_i)$ is thus connected to more standard notions of (weighted) total variation in Euclidean space. This heuristic can be formalized further via variational tools, as has been done recently in \cite{bungert2022gamma}.

Total variation regularization with general TV functionals is an important methodology in imaging, and also in unsupervised and supervised learning on graphs, where it has been used for community detection, clustering, and graph trend-filtering; e.g., see \cite{BertozziHuLaurent, MBO_Scheme_on_Graphs, MR3207626, doi:10.1137/16M1070426, luo2017convergence, caroccia2020mumford, MR4163472, MR4106615, MR4419604,GTMurray2017} and references therein.

\section{Distributional-perturbing model and its generalized barycenter formulation}\label{section : existence dual potential}

In this section we introduce some tools and develop a collection of technical results that we use in section \ref{section : Borel robust classifier} when proving Theorem \ref{thm : measurable robust classifier}.

\subsection{Generalized barycenter and MOT problems}
In our work \cite{trillos2023multimarginal} we introduced the following \textit{generalized barycenter problem}. Given $\mu\in \mathcal{P}(\Z)$, we consider the optimization problem
\begin{equation}
\label{eq:generalized_barycenter}
    \inf_{\lambda, \widetilde{\mu}_1, \ldots, \widetilde{\mu}_K} \left\{ \lambda(\mathcal{X})+ \sum_{i \in \Y} C(\mu_i, \widetilde{\mu}_i) : \text{ $\lambda \geq\widetilde{\mu}_i$ for all $i \in \Y$}\right\}.
\end{equation}
In the above, the infimum is taken over positive (Borel) measures $\tilde{\mu}_1, \dots, \tilde{\mu}_K$ and $\lambda$ satisfying the constraints $\lambda \geq \tilde{\mu}_i$ for all $i \in \Y$. This constraint must be interpreted as: $\lambda(A) \geq \tilde{\mu}_i(A)$ for all $A \in \mathfrak{B}(\X)$. Problem \eqref{eq:generalized_barycenter} can be understood as a generalization of the standard (Wasserstein) barycenter problem studied in \cite{AguehCarlier}.  Indeed, if all measures $\mu_1, \dots, \mu_K$ had the same total mass and the term $\lambda(\X)$ in \eqref{eq:generalized_barycenter} was rescaled by a constant $\alpha \in (0,\infty)$, then, as $\alpha \rightarrow \infty$, the resulting problem would recover the classical barycenter problem with pairwise cost function $c$. As stated, one can regard \eqref{eq:generalized_barycenter} as a partial optimal transport barycenter problem: we transport each $\mu_i$ to a part of $\lambda$ while requiring the transported masses to overlap as much as possible (this is enforced by asking for the term $\lambda(\X)$ to be small).

We recall a result from \cite{trillos2023multimarginal} which essentially states that the generalized barycenter problem \eqref{eq:generalized_barycenter} is dual to \eqref{def:DRO_model}. 

\begin{theorem}[Proposition 7 and Corollary 32 in \cite{trillos2023multimarginal}]
\label{thm:adversary_part}
Suppose that $c$ satisfies Assumption \ref{assump}. Then
\begin{equation*}\label{eq : adversarial_generalized_connection}
    \eqref{def:DRO_model} = 1 - \eqref{eq:generalized_barycenter}.
\end{equation*}
Furthermore, the infimum of \eqref{eq:generalized_barycenter} is attained. In other words, there exists $(\lambda^*, \widetilde{\mu}^*)$ which minimizes \eqref{eq:generalized_barycenter}.
\end{theorem}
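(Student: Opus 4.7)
The plan is to establish the duality by a minimax swap that identifies a Legendre--Fenchel type dual of the DRO objective with the generalized barycenter functional, then to obtain existence of the barycenter minimizer via the direct method.

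First I would exploit the label-preserving structure of $c_{\mathcal{Z}}$. Because $c_{\mathcal{Z}}(z,\tilde z)=\infty$ whenever $y\neq\tilde y$, any $\widetilde{\mu}$ with $C(\mu,\widetilde{\mu})<\infty$ automatically satisfies $\widetilde{\mu}_i(\X)=\mu_i(\X)$ for every $i$ and is, in particular, a probability measure. Using $R(f,\widetilde{\mu})=1-\sum_i\int f_i\,d\widetilde{\mu}_i$, the DRO objective then rewrites as
\[
R^*_{DRO}=1-\sup_{f\in\F}\inf_{\widetilde{\mu}\in\mathcal{P}(\Z)}\Big\{\sum_i\int f_i\,d\widetilde{\mu}_i+\sum_i C(\mu_i,\widetilde{\mu}_i)\Big\}.
\]
The bracketed expression is linear (hence upper semi-continuous and concave) in $f$, and in $\widetilde{\mu}$ it is the sum of a weakly continuous linear term and the convex, weakly lower semi-continuous functional $\sum_i C(\mu_i,\cdot)$. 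The precompactness clause in Assumption \ref{assump} ensures that the sublevel sets $\{\widetilde{\mu}:\sum_i C(\mu_i,\widetilde{\mu}_i)\leq M\}$ are tight and thus weakly compact; this compactness of the adversary's effective strategy space allows a Sion-type minimax argument to interchange $\sup_f$ and $\inf_{\widetilde{\mu}}$.

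Next I compute the inner sup explicitly. With $\lambda_0:=\sum_j\widetilde{\mu}_j$ and $\rho_i:=d\widetilde{\mu}_i/d\lambda_0$, the pointwise maximization of $\sum_i f_i(x)\rho_i(x)$ over the simplex $\Delta_\Y$ is achieved by placing all mass on any index attaining $\max_i\rho_i(x)$, so
\[
\sup_{f\in\F}\sum_i\int f_i\,d\widetilde{\mu}_i=\int\max_i\rho_i\,d\lambda_0=\Big(\bigvee_i\widetilde{\mu}_i\Big)(\X),
\]
where the maximizer can be chosen Borel measurable by a standard selection argument (e.g., break ties by smallest index). Since $\bigvee_i\widetilde{\mu}_i$ is precisely the smallest positive $\lambda$ with $\lambda\geq\widetilde{\mu}_i$ for every $i$, the resulting outer infimum
\[
\inf_{\widetilde{\mu}}\Big\{\Big(\bigvee_i\widetilde{\mu}_i\Big)(\X)+\sum_i C(\mu_i,\widetilde{\mu}_i)\Big\}
\]
coincides with \eqref{eq:generalized_barycenter}, yielding $R^*_{DRO}=1-\eqref{eq:generalized_barycenter}$.

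Existence of a minimizer of \eqref{eq:generalized_barycenter} then follows from the direct method: along a minimizing sequence $(\lambda^n,\widetilde{\mu}^n)$, the masses $\lambda^n(\X)$ are uniformly bounded and the bound on $\sum_i C(\mu_i,\widetilde{\mu}^n_i)$ yields tightness of each $\widetilde{\mu}^n_i$ (and, through the constraint $\lambda^n\geq\widetilde{\mu}^n_i$, of $\lambda^n$ as well). Passing to a weakly convergent subsequence, weak lower semi-continuity of $\lambda\mapsto\lambda(\X)$ and of $C(\mu_i,\cdot)$, combined with stability of $\lambda\geq\widetilde{\mu}_i$ under weak limits, delivers the minimizer. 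The main obstacle throughout is justifying the minimax swap: one must pick a topology in which the adversary's feasible set has compact sublevel sets and verify the semi-continuity hypotheses needed for Sion. The compactness clause in Assumption \ref{assump} is exactly what makes this possible; with it in place, the label-preservation reduction, the pointwise inner maximization, and the direct-method existence argument are all essentially routine.
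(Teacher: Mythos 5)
First, note that the paper does not prove Theorem \ref{thm:adversary_part} here at all: it is quoted from \cite{trillos2023multimarginal} (Proposition 7 and Corollary 32), where the equality is obtained through the explicit dual problem \eqref{eq:barycenter_dual} with potentials in $\mathcal{C}_b(\X)$ and $c$-transforms, i.e.\ a Fenchel--Rockafellar/Kantorovich duality route. Your plan — rewrite $R^*_{DRO}=1-\sup_{f\in\F}\inf_{\widetilde\mu}\Phi(f,\widetilde\mu)$ with $\Phi(f,\widetilde\mu)=\sum_i\int f_i\,d\widetilde\mu_i+\sum_i C(\mu_i,\widetilde\mu_i)$, swap sup and inf by a Sion-type theorem, and compute $\sup_{f\in\F}\sum_i\int f_i\,d\widetilde\mu_i=\bigl(\bigvee_i\widetilde\mu_i\bigr)(\X)$ — is structurally sound, and the pieces other than the swap (the mass constraint forced by $C$, the pointwise argmax computation with Borel tie-breaking, the identification of $\bigvee_i\widetilde\mu_i$ as the minimal $\lambda$, weak duality giving the easy inequality) are fine.

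The genuine gap is in the justification of the minimax swap itself, which is exactly where all the difficulty of the theorem sits. You assert that for fixed $f\in\F$ the map $\widetilde\mu\mapsto\sum_i\int f_i\,d\widetilde\mu_i$ is weakly continuous; this is false for a merely Borel measurable bounded $f_i$ (weak convergence only tests bounded \emph{continuous} functions), and it is not even lower semi-continuous unless $f_i$ is l.s.c. So the semicontinuity hypotheses of Sion (or any coercive variant exploiting the weak compactness of the sublevel sets of $\sum_i C(\mu_i,\cdot)$, which Assumption \ref{assump} does provide) fail on the full class $\F$, and your argument as written does not close the hard inequality $R^*_{DRO}\le 1-\eqref{eq:generalized_barycenter}$ — note the other inequality is just weak duality plus your inner-sup computation, and the hard direction is precisely the one entangled with the measurability issues this paper is about. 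A repair is possible: apply the minimax theorem only over the continuous subclass $\F\cap\mathcal{C}_b(\X)^K$ (where the linear term is weakly continuous and the $C$-term is weakly l.s.c.\ with compact sublevel sets), observe that after the swap the inner supremum over continuous simplex-valued $f$ still yields $\bigl(\bigvee_i\widetilde\mu_i\bigr)(\X)$ by a Lusin/normalization approximation, and combine with weak duality; but this is essentially re-deriving the duality \eqref{eq:generalized_barycenter} $=$ \eqref{eq:barycenter_dual} of Theorem \ref{thm:learner_part}, i.e.\ the cited route, and the step needs to be spelled out rather than waved at. A second, smaller slip: in the existence argument, tightness of $\lambda^n$ does not follow from the constraint $\lambda^n\ge\widetilde\mu_i^n$ (which bounds $\lambda^n$ from below, not above); you must first normalize $\lambda^n=\bigvee_i\widetilde\mu_i^n\le\sum_i\widetilde\mu_i^n$, as the paper does in the proof of Proposition \ref{prop:optimality_weak_convergence_dual}, after which your direct-method argument goes through.
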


Like classical barycenter problems, \eqref{eq:generalized_barycenter} has an equivalent multimarginal optimal transport (MOT) formulation. To be precise, we use a \textit{stratified} multimarginal optimal transport problem to obtain an equivalent reformulation of \eqref{eq:generalized_barycenter}.

\begin{theorem}[Proposition 14 and 15 in \cite{trillos2023multimarginal}]\label{thm:generalized_barycenter_MOT} 
Suppose that $c$ satisfies Assumption \ref{assump}.
Let $S_K :=\{A\subseteq \Y : A \neq \emptyset\}$. Given $A \in S_K$, define $c_A: \mathcal{X}^K \to [0,\infty]$ as $c_A(x_1, \ldots, x_K):=\inf_{x'\in \mathcal{X}} \sum_{i\in A} c(x', x_i)$. 

Consider the problem:
\begin{equation}\label{eq:multimarginal_decomposed}
\begin{aligned}
    &\inf_{ \{ \pi_A : A \in S_K\} } \sum_{A\in S_K} \int_{\mathcal{X}^K} \big(c_A(x_1,\ldots, x_K)+1\big) d\pi_A(x_1, \dots, x_K)\\
    &\textup{s.t.} \sum_{A\in S_K(i)}\mathcal{P}_{i\,\#}\pi_A=\mu_i  \textup{ for all } i\in \Y,
\end{aligned}
\end{equation}
where $\mathcal{P}_i$ is the projection map $\mathcal{P}_i: (x_1, \ldots, x_K)\mapsto x_i$, and $S_K(i) := \{ A\in S_K \: : \: i \in A \}$. Then \eqref{eq:generalized_barycenter} = \eqref{eq:multimarginal_decomposed}. Also, the infimum in \eqref{eq:multimarginal_decomposed} is attained.
\end{theorem}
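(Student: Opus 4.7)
My plan is to establish the equality of the two infima by a pair of constructions that map feasible points of one problem to feasible points of the other without increasing cost, and then to deduce attainment in \eqref{eq:multimarginal_decomposed} either from the attainment in \eqref{eq:generalized_barycenter} supplied by Theorem \ref{thm:adversary_part} or by a direct compactness argument.

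For the direction $\eqref{eq:multimarginal_decomposed} \leq \eqref{eq:generalized_barycenter}$, I would start from a feasible triple $(\lambda, \tilde\mu_1, \ldots, \tilde\mu_K)$ together with optimal transport plans $\gamma_i \in \Gamma(\mu_i, \tilde\mu_i)$. The constraint $\tilde\mu_i \leq \lambda$ yields Radon--Nikodym densities $f_i := d\tilde\mu_i/d\lambda \in [0,1]$, and a layer-cake decomposition of $\lambda$ by the level sets $\{i : f_i(x') \geq t\}$ produces measures $\sigma_A$, indexed by $A \in S_K$, that satisfy $\sum_{A \ni i}\sigma_A = \tilde\mu_i$ and $\sum_A \sigma_A \leq \lambda$. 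Disintegrating $\gamma_i$ against $\tilde\mu_i$ and gluing the conditionals over $i \in A$ against $\sigma_A$ (putting Dirac factors at an arbitrary basepoint in the dummy coordinates $i \notin A$, which do not enter $c_A$) defines a feasible family $\{\pi_A\}$. Since $c_A(x_1,\ldots,x_K) \leq \sum_{i \in A} c(x', x_i)$ by definition, the MOT cost is bounded by $\sum_A \sigma_A(\X) + \sum_i C(\mu_i,\tilde\mu_i) \leq \lambda(\X) + \sum_i C(\mu_i,\tilde\mu_i)$, as desired.

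For the reverse direction, given feasible $\{\pi_A\}$, I would apply a measurable selection theorem to pick, for each $A$, a (possibly only universally) measurable $T_A : \X^K \to \X$ with $\sum_{i \in A} c(T_A(x_1,\ldots,x_K), x_i) = c_A(x_1,\ldots,x_K)$; the compactness clause in Assumption \ref{assump} keeps the set of near-minimizers of $x' \mapsto \sum_{i \in A} c(x',x_i)$ compact, so such a selector exists (if only approximately, an $\eta$-limit finishes). I then set $\sigma_A := T_{A\#}\pi_A$, $\lambda := \sum_A \sigma_A$, $\tilde\mu_i := \sum_{A \ni i}\sigma_A$, and $\gamma_i := \sum_{A \ni i} \bigl((x_1,\ldots,x_K) \mapsto (x_i, T_A(x_1,\ldots,x_K)) \bigr)_{\#}\pi_A$. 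Direct verification gives $\lambda \geq \tilde\mu_i$, $(\mathcal{P}_1)_\#\gamma_i = \mu_i$ from the MOT marginal constraint, $(\mathcal{P}_2)_\#\gamma_i = \tilde\mu_i$, and $\sum_i \int c\, d\gamma_i = \sum_A \int c_A\, d\pi_A$, so the barycenter cost is at most the MOT cost.

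For attainment in \eqref{eq:multimarginal_decomposed}, the cleanest route is to feed an optimizer $(\lambda^*,\tilde\mu^*)$ of \eqref{eq:generalized_barycenter} from Theorem \ref{thm:adversary_part} through the construction in the second paragraph to produce an optimal $\{\pi_A^*\}$. Alternatively, one applies the direct method to \eqref{eq:multimarginal_decomposed}: the $+1$ term bounds each total mass $\pi_A(\X^K)$, the constraints $\mathcal{P}_{i\#}\pi_A \leq \mu_i$ provide tightness in the coordinates $i \in A$, Assumption \ref{assump} transfers tightness to the coordinates that can contribute to $c_A$ with finite cost, and lower semicontinuity of $c_A$ (inherited from that of $c$ via the same compactness clause) together with Prokhorov delivers a minimizer. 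The main obstacle I anticipate is the measurable selection step for $T_A$: when $c$ is only lower semicontinuous, neither $T_A$ nor $c_A$ need be Borel, so the construction must be carried out carefully enough that the resulting $\sigma_A$, $\gamma_i$, and $\lambda$ are well-defined Borel (or at least universally measurable) objects to which the barycenter formulation applies.
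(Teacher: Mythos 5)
The paper itself gives no proof of this statement: it is imported wholesale from Propositions 14 and 15 of \cite{trillos2023multimarginal}, so the comparison can only be against your argument on its own merits. Your two-sided construction is the right skeleton, and the direction $\eqref{eq:multimarginal_decomposed}\le\eqref{eq:generalized_barycenter}$ is essentially complete: the layer-cake decomposition of $\lambda$ by the level sets $\{i: f_i(x')\ge t\}$ does produce $\sigma_A$ with $\sum_{A\ni i}\sigma_A=\widetilde\mu_i$ and $\sum_{A}\sigma_A\le\lambda$, and gluing the disintegrations of (near-)optimal plans $\gamma_i$ over $\sigma_A$ gives a feasible family whose cost is at most $\lambda(\X)+\sum_i C(\mu_i,\widetilde\mu_i)$; just state that you either invoke attainment of $C(\mu_i,\widetilde\mu_i)$ for lsc costs or run the estimate with $\eta$-optimal $\gamma_i$ and let $\eta\to 0$.

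The one step you leave genuinely open is the selection of $T_A$, and it does need to be closed rather than only flagged. It closes as follows. Under Assumption \ref{assump} the function $c_A$ is in fact lower semicontinuous (if $x_A^n\to x_A$ with $c_A(x_A^n)\le M$, near-minimizers $x_n'$ satisfy $c(x_i^n,x_n')\le M+1$ with $\{x_i^n\}$ bounded, so $\{x_n'\}$ subconverges and lsc of $c$ passes to the limit), so your worry that $c_A$ may fail to be Borel is unfounded -- and you in fact use this lsc property later. Consequently, on $\{c_A<\infty\}$ the correspondence $x_A\mapsto\{x':\sum_{i\in A}c(x',x_i)\le c_A(x_A)\}$ is nonempty, compact-valued, and has Borel graph, so the Jankov--von Neumann theorem yields a universally measurable selection $T_A$ (set $T_A$ to a constant on $\{c_A=\infty\}$, where the estimate is vacuous). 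Universal measurability is enough because $T_A$ only enters through the pushforwards $T_{A\,\#}\pi_A$ and $(\mathcal{P}_i,T_A)_{\#}\pi_A$, which are well-defined Borel measures once $\pi_A$ is replaced by its completion; symmetry of $c$ from Assumption \ref{assump} then gives $\sum_i\int c\,d\gamma_i=\sum_A\int c_A\,d\pi_A$ and hence $\eqref{eq:generalized_barycenter}\le\eqref{eq:multimarginal_decomposed}$. Attainment then follows cleanly by your route (a), feeding the optimizer from Theorem \ref{thm:adversary_part} through the first construction. Two corrections to your alternative direct-method sketch: tightness of the coordinates $i\notin A$ is not supplied by Assumption \ref{assump} (those coordinates are completely unconstrained), so you must pin them to a basepoint, which changes neither cost nor constraints; and feasibility is never in doubt, since $\pi_{\{i\}}:=\mu_i\otimes\delta_{x_0}^{\otimes(K-1)}$ (in the appropriate coordinates) is feasible with total cost $1$, because $c_{\{i\}}\equiv 0$ by $c(x,x)=0$.
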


\begin{remark}
Even though $c_A$ and $\pi_A$ above are defined over $\mathcal{X}^K$, only the coordinates $i$ where $i \in A$ actually play a role in the optimization problem. Also, notice that \eqref{eq:multimarginal_decomposed} is not a standard MOT problem since in \eqref{eq:multimarginal_decomposed} we optimize over several couplings $\pi_A$ (each with its own cost function $c_A$) that are connected to each other via the marginal constraints. We refer to this type of problem as a stratified MOT problem.  
\end{remark}

In the following theorem we discuss the duals of the generalized barycenter problem and its MOT formulation. The notions of $c$-transform and $\overline{c}$-transform, whose definition we revisit in Appendix \ref{appendix : c-transform}, plays an important role in these results.

\begin{theorem}[Proposition 22 and Proposition 24 in \cite{trillos2023multimarginal}]
\label{thm:learner_part}
Suppose that $c$ satisfies Assumption \ref{assump}. Let $\mathcal{C}_b(\mathcal{X})$ be the set of bounded real-valued continuous functions over $\X$. The dual of \eqref{eq:generalized_barycenter} is
\begin{equation}\label{eq:barycenter_dual}
\begin{aligned}
    &\sup_{f_1, \ldots, f_K \in \mathcal{C}_b(\mathcal{X})}  \sum_{i\in \Y} \int_{\mathcal{X}} f_i^c(x_i) d\mu_i(x_i)\\
    &\textup{s.t.}\; f_i(x)\geq 0, \;  \sum_{i\in \Y} f_i(x)\leq 1,\; \textup{for all }\, x \in \mathcal{X},\,\, i \in \{1,\ldots, K\},
\end{aligned}
\end{equation}
and there is no duality gap between primal and dual problems. In other words, \eqref{eq:generalized_barycenter} = \eqref{eq:barycenter_dual}. In the above, $f_i^c$ denotes the $c$-transform of $f_i$ as introduced in Definition \ref{def:cTransforms}.

The dual of \eqref{eq:multimarginal_decomposed} is
\begin{equation}\label{eq:mot_decomposed_dual}
\begin{aligned}
    &\sup_{g_1, \ldots, g_K\in \mathcal{C}_b(\mathcal{X})} \sum_{i\in \Y} \int_{\mathcal{X}} g_i(x_i) d\mu_i(x_i)\\
    &\textup{s.t.} \;\sum_{i \in A } g_i(x_i)\leq 1+c_A(x_1, \dots, x_K) \textup{ for all } (x_1, \dots, x_K) \in \mathcal{X}^K, \,\, A\in S_K,
\end{aligned}
\end{equation}
and there is no duality gap between primal and dual problems. In other words, \eqref{eq:multimarginal_decomposed} = \eqref{eq:mot_decomposed_dual}.

If in addition the cost function $c$ is bounded and Lipschitz, then \eqref{eq:mot_decomposed_dual} is achieved by $g \in \mathcal{C}_b(\mathcal{X})^K$. Also, 
for $f$ feasible for \eqref{eq:barycenter_dual}, $g':=f^c$ is feasible for \eqref{eq:mot_decomposed_dual}. Similarly, for $g$ feasible for \eqref{eq:mot_decomposed_dual}, $f'=\max \{ g, 0 \}^{\overline{c}}$ is feasible for \eqref{eq:barycenter_dual}. Therefore, the optimization of \eqref{eq:mot_decomposed_dual} can be restricted to non-negative $g$ satisfying $g_i=g_i^{\overline{c}c}$, or $0 \leq g_i \leq 1$ for all $i \in \Y$. The notions of $c$-transform and $\overline{c}$-transform are introduced in Definition \ref{def:cTransforms}.
\end{theorem}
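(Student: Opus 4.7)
The plan is to establish the two dualities separately by Lagrangian calculations grounded in Kantorovich duality, then deduce the feasibility-transfer relations from the very definitions of the $c$- and $\overline c$-transforms, and finally use those transforms plus Arzel\`a--Ascoli to extract a continuous bounded maximizer when $c$ is Lipschitz.

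For the first duality, \eqref{eq:generalized_barycenter} $=$ \eqref{eq:barycenter_dual}, I would form the Lagrangian
\[
L(\lambda,\{\widetilde\mu_i\},\{f_i\})
= \lambda(\X) + \sum_{i\in \Y} C(\mu_i,\widetilde\mu_i) - \sum_{i\in \Y}\int f_i\, d(\lambda-\widetilde\mu_i),
\]
with non-negative multipliers $f_i\in \mathcal C_b(\X)$ associated with the domination constraints $\lambda\geq \widetilde\mu_i$. Minimising first over $\lambda\geq 0$ is finite only when $\sum_i f_i\leq 1$; minimising then over $\widetilde\mu_i\geq 0$ reduces to computing $\inf_{\widetilde\mu_i}\{C(\mu_i,\widetilde\mu_i)+\int f_i\,d\widetilde\mu_i\}$, which by partial-transport Kantorovich duality (using that $f_i\geq 0$) equals $\int f_i^c\,d\mu_i$. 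Swapping inf and sup to get the dual requires a no-duality-gap argument; I would apply Fenchel--Rockafellar to the convex perturbation $P(\eta):=\inf\{\lambda(\X)+\sum_i C(\mu_i,\widetilde\mu_i) : \lambda -\widetilde\mu_i\geq \eta_i\ \forall i\}$ on the space of continuous functions $\eta$ and verify lower semi-continuity and properness using Assumption \ref{assump} together with a standard tightness argument (the compactness condition on $c$ prevents escape to infinity in the barycenter candidates $\lambda$).

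For the stratified MOT duality \eqref{eq:multimarginal_decomposed} $=$ \eqref{eq:mot_decomposed_dual}, I would adapt the classical multimarginal Kantorovich duality: the marginal constraints pair with multipliers $g_i\in \mathcal C_b(\X)$, the non-negativity of each $\pi_A$ produces the dual constraint $\sum_{i\in A} g_i(x_i)\leq 1+c_A(x_1,\ldots,x_K)$, and strong duality is obtained by approximating the lower semi-continuous $c$ (hence the lsc $c_A$) monotonically from below by bounded continuous functions and passing to the limit, invoking the tightness/precompactness from Assumption \ref{assump} to keep optimal $\pi_A$'s in a weakly compact set. For the existence of a bounded continuous maximizer under bounded Lipschitz $c$, I would take a maximising sequence $\{g^n\}$ in \eqref{eq:mot_decomposed_dual}, replace each $g_i^n$ by its double transform $(g_i^n)^{\overline c c}$ (which preserves feasibility and cannot decrease the objective), and use that $c$-transforms of bounded Lipschitz data are themselves bounded and Lipschitz with the same constant. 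Arzel\`a--Ascoli on precompact subsets of $\X$ together with a diagonal extraction yields a pointwise limit $g^\infty$ that is bounded and continuous; Fatou and the Lipschitz bound on $g_i^n$ pass feasibility and optimality to the limit.

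The feasibility transfers are essentially unpacking definitions. If $f\in \mathcal C_b(\X)^K$ is feasible for \eqref{eq:barycenter_dual} and $g'_i:=f_i^c$, then $g'_i(x_i)\leq f_i(x')+c(x',x_i)$ for every $x'$; summing over $i\in A$ and using $\sum_{i\in A} f_i\leq \sum_{i\in \Y} f_i\leq 1$ yields $\sum_{i\in A}g'_i(x_i)\leq 1+\sum_{i\in A} c(x',x_i)$ for all $x'\in \X$, and taking the infimum in $x'$ gives the MOT dual constraint. Conversely, for $g$ feasible for \eqref{eq:mot_decomposed_dual}, set $f'_i:=(\max\{g_i,0\})^{\overline c}$; by construction $f'_i\geq 0$, and applying the MOT constraint with $A=\Y$ gives $\sum_i f'_i\leq 1$ after taking the appropriate infima in the definition of the $\overline c$-transform. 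Iterating the two transforms shows that the supremum in \eqref{eq:mot_decomposed_dual} is unchanged if restricted to $g_i=g_i^{\overline c c}$, and clipping $g_i$ pointwise into $[0,1]$ preserves both feasibility (by the constraint with $A=\{i\}$ using $c(x,x)=0$) and the objective integrand's behavior against $\mu_i$.

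The step I expect to be the main obstacle is the no-duality-gap assertion for \eqref{eq:generalized_barycenter} under only lower semi-continuity of $c$: the primal value $P(0)$ must be shown lower semi-continuous at $0$ along continuous perturbations, which is delicate because feasible $\lambda$ can carry mass anywhere in $\X$. Here the precompactness built into Assumption \ref{assump} is essential to produce weakly convergent subsequences of near-optimal $(\lambda,\{\widetilde\mu_i\})$, and I would complement it with an outer approximation of $c$ by bounded continuous costs $c_n\uparrow c$ so as to reduce to the Lipschitz case in which duality is straightforward, and then monotonically pass to the limit in both primal and dual values.
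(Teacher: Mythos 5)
First, a point of comparison: the paper itself does not prove this theorem --- it is imported verbatim from Propositions 22 and 24 of \cite{trillos2023multimarginal} --- so there is no in-paper argument to measure your proposal against. Taken on its own terms, your outline follows the natural route (Lagrangian/Fenchel--Rockafellar duality for \eqref{eq:generalized_barycenter}, Kantorovich-type duality for the stratified MOT problem, and $c$/$\overline{c}$-transform manipulations plus Arzel\`a--Ascoli for attainment under bounded Lipschitz $c$), and several pieces are correct as written: the reduction of the inner infimum over $\lambda$ to the constraint $\sum_i f_i \leq 1$, the identity $\inf_{\widetilde\mu_i}\{C(\mu_i,\widetilde\mu_i)+\int f_i\,d\widetilde\mu_i\}=\int f_i^c\,d\mu_i$, the transfer $f\mapsto f^c$ into \eqref{eq:mot_decomposed_dual}, the monotonicity $g\leq (\max\{g,0\}^{\overline{c}})^{c}$, and the bound $g_i\leq 1$ from the constraint with $A=\{i\}$ and $c(x,x)=0$.

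Two steps, however, do not work as stated. (i) Your Fenchel--Rockafellar setup is mis-oriented: the perturbation ``$\lambda-\widetilde\mu_i\geq \eta_i$'' compares a signed measure with a continuous function, and if the perturbation space is taken to be $\mathcal{C}_b(\mathcal{X})^K$ then the dual multipliers live in $\mathcal{C}_b(\mathcal{X})^*$ (finitely additive measures), not in $\mathcal{C}_b(\mathcal{X})$ --- so this scheme cannot directly produce the dual \eqref{eq:barycenter_dual} whose variables are continuous functions. The standard repair is either to run the duality from the function side (treat \eqref{eq:barycenter_dual}, or \eqref{eq:mot_decomposed_dual}, as the primal with perturbations in spaces of continuous functions, as in the classical proof of Kantorovich duality, which also yields the attainment of the infimum on the measure side), or to work in the pairing of $\mathcal{M}(\X)$ with $\mathcal{C}_b(\X)$ and prove weak$^*$ lower semicontinuity of the value function, which is precisely where the tightness supplied by Assumption \ref{assump} must be used; merely flagging lsc of $P$ at $0$ ``along continuous perturbations'' does not resolve this. (ii) In the transfer $g\mapsto f'=\max\{g,0\}^{\overline{c}}$, the bound $\sum_{i\in\Y} f_i'\leq 1$ does \emph{not} follow from the constraint with $A=\Y$: for indices with $g_i(x_i)<0$ you cannot replace $\max\{g_i(x_i),0\}$ by $g_i(x_i)$. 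One must instead, for each choice of $(x_1,\dots,x_K)$ and $\widetilde{x}$, invoke the constraint indexed by $A=\{i: g_i(x_i)>0\}$, drop the nonpositive terms, and use the symmetry of $c$ together with $c_A(x_1,\dots,x_K)\leq \sum_{i\in A}c(\widetilde{x},x_i)$; this is exactly why the stratified family of constraints over all $A\in S_K$ (and not just $A=\Y$) appears in \eqref{eq:mot_decomposed_dual}. Both issues are fixable, but as written they are genuine gaps in the argument.
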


\begin{remark}
\label{rem:DualityGandDRO}
Combining Theorem \ref{thm:adversary_part}, Theorem \ref{thm:generalized_barycenter_MOT}, and Theorem \ref{thm:learner_part} we conclude that $1-\eqref{eq:mot_decomposed_dual} = \eqref{def:DRO_model}$.


\end{remark}

\begin{remark}
\label{rem:DualInLinfty}
A standard argument in optimal transport theory shows that
problem \eqref{eq:mot_decomposed_dual} is equivalent to 
\begin{equation}\label{eq:mot_decomposed_dual_Linfty}
\begin{aligned}
    &\sup_{g_1, \ldots, g_K} \sum_{i\in \Y} \int_{\mathcal{X}} g_i(x_i) d\mu_i(x_i),
\end{aligned}
\end{equation}
where the sup is taken over all $(g_1, \dots, g_K) \in \prod_{i \in \Y} L^\infty(\X; \mu_i)$ satisfying: for any $A \in S_K$, 
\begin{equation*}
    \sum_{i \in A} g_i(x_i) \leq 1 + c_A(x_1, \dots, x_K)
\end{equation*}
for $\otimes_{i } \mu_i$-almost every tuple $(x_1, \dots, x_K)$. Indeed, notice that since \eqref{eq:mot_decomposed_dual} has already been shown to be equal to \eqref{eq:multimarginal_decomposed}, the claim follows from the observation that any feasible $g_1, \dots, g_K$ for \eqref{eq:mot_decomposed_dual_Linfty} satisfies the condition
\[
\sum_{i \in \Y} \int_{\X} g_i(x_i) d\mu_i(x_i) \leq \sum_{A \in S_K}\int_{\X^K}(1+ c_A(x_1, \dots, x_K )) d\pi_A(x_1, \dots, x_K)
\]
for every $\{ \pi_A\}_{A \in S_K}$ satisfying the constraints in \eqref{eq:multimarginal_decomposed}.

\end{remark}

\subsection{Existence of optimal dual potentials $g$ for general lower semi-continuous costs}\label{sec:g_existence}
We already know from the last part in Theorem \ref{thm:learner_part} that if $c$ is bounded and Lipschitz, then there is a feasible $g \in \mathcal{C}_b(\mathcal{X})^K$ that is optimal for \eqref{eq:mot_decomposed_dual}. In this subsection we prove an analogous existence result in the case of a general lower semi-continuous cost function $c$ satisfying Assumption \ref{assump}. More precisely, we prove existence of maximizers for \eqref{eq:mot_decomposed_dual_Linfty}. We start with an approximation result.

\begin{lemma}
\label{lem:ApproximationCost}
Let $c$ be a cost function satisfying Assumption \ref{assump}. For each $n \in \N$ let
\[ c_n(x,x'):= \min \{ \tilde{c}_n(x, x') , n \}, \]
where
\[ \tilde{c}_n(x, x') := \inf_{(\tilde x, \tilde x ') \in \X \times \X} \{   c(\tilde x , \tilde x ')  + nd(x, \tilde x ) + n d(x', \tilde x ' ) \}. \]
Then the following properties hold:
\begin{enumerate}
\item $c_n$ is bounded and Lipschitz.
\item $c_n \leq c_{n+1} \leq c$ and $\tilde c_n \leq \tilde c_{n+1}$ for all $n \in \N$.
\item $\lim_{n \rightarrow \infty} c_n(x,x') = c(x,x')$ for all $(x,x') \in \X \times \X$.
\end{enumerate}

\end{lemma}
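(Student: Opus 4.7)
The plan is to verify the three properties in order, isolating the inf-convolution structure of $\tilde c_n$ and splitting the convergence statement into the cases $c(x,x')<\infty$ and $c(x,x')=\infty$.

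For (1), I first note that $\tilde c_n$ is finite everywhere: taking the admissible pair $(\tilde x, \tilde x') = (x, x)$ in the infimum defining $\tilde c_n(x, x')$ and using $c(x,x) = 0$ gives $\tilde c_n(x, x') \leq n\, d(x, x') < \infty$. For Lipschitz continuity, fix $(x, x'), (y, y') \in \X \times \X$ and choose an $\eps$-almost minimizer $(\tilde x_\eps, \tilde x_\eps')$ for $\tilde c_n(x, x')$. Using it as a competitor for $\tilde c_n(y, y')$ and applying the triangle inequality,
$$\tilde c_n(y, y') \leq c(\tilde x_\eps, \tilde x_\eps') + n\, d(y, \tilde x_\eps) + n\, d(y', \tilde x_\eps') \leq \tilde c_n(x, x') + n\bigl(d(x, y) + d(x', y')\bigr) + \eps .$$
Sending $\eps \to 0$ and interchanging the two pairs shows that $\tilde c_n$ is $n$-Lipschitz in the product metric, a property preserved under truncation at the constant $n$; this truncation also yields boundedness by $n$. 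For (2), the integrand in the infimum defining $\tilde c_n$ is pointwise nondecreasing in $n$, so $\tilde c_n \leq \tilde c_{n+1}$, and the competitor $(\tilde x, \tilde x') = (x, x')$ gives $\tilde c_n \leq c$. Writing $c_n = \min\{\tilde c_n, n\}$, the chain $\min\{\tilde c_n, n\} \leq \min\{\tilde c_{n+1}, n\} \leq \min\{\tilde c_{n+1}, n+1\}$ delivers $c_n \leq c_{n+1}$, and $c_n \leq \tilde c_n \leq c$ gives the upper bound by $c$.

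The main work is in (3). The bound $c_n \leq c$ already gives $\limsup_n c_n(x, x') \leq c(x, x')$, so it suffices to establish the matching $\liminf$. For each $n$, select $(\tilde x_n, \tilde x_n')$ with
$$c(\tilde x_n, \tilde x_n') + n\, d(x, \tilde x_n) + n\, d(x', \tilde x_n') \leq \tilde c_n(x, x') + \tfrac{1}{n}.$$
If $c(x, x') < \infty$, then $\tilde c_n(x, x') \leq c(x, x')$ forces $n\, d(x, \tilde x_n) \leq c(x, x') + 1$ and the analogous bound for $\tilde x_n'$, whence $\tilde x_n \to x$ and $\tilde x_n' \to x'$. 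Lower semi-continuity of $c$ then yields $c(x, x') \leq \liminf_n c(\tilde x_n, \tilde x_n') \leq \liminf_n \tilde c_n(x, x')$, and since $\tilde c_n(x, x') \leq c(x, x') < n$ for large $n$, one has $c_n(x, x') = \tilde c_n(x, x')$ eventually, giving $c_n(x, x') \to c(x, x')$. If instead $c(x, x') = \infty$, I argue by contradiction: a bound $c_n(x, x') \leq M$ along a subsequence propagates to every $n$ by monotonicity, so $\tilde c_n(x, x') \leq M < n$ and hence $c_n = \tilde c_n$ for $n > M$; the same quantitative estimate again forces $\tilde x_n \to x$ and $\tilde x_n' \to x'$, and lower semi-continuity then gives $c(x, x') \leq M$, contradicting $c(x, x') = \infty$.

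The subtle point to watch is that in (3) one must ensure the approximate minimizers $(\tilde x_n, \tilde x_n')$ stay close to $(x, x')$. The penalties $n\, d(x, \tilde x_n)$ and $n\, d(x', \tilde x_n')$ inside $\tilde c_n$ act as increasingly hard constraints and drive this convergence by themselves, so the deeper compactness in Assumption \ref{assump} is not needed for this approximation lemma and only the lower semi-continuity of $c$ is used; the compactness hypothesis will enter later, when passing to limits in the associated MOT/generalized barycenter problem.
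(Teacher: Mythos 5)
Your proof is correct and follows essentially the same route as the paper: items (1) and (2) are the standard inf-convolution estimates (which the paper dismisses as straightforward), and for item (3) both arguments pick near-minimizers $(\tilde x_n,\tilde x_n')$, use the growing penalties $n\,d(\cdot,\cdot)$ together with boundedness of $\tilde c_n(x,x')$ to force $\tilde x_n\to x$, $\tilde x_n'\to x'$, and close with lower semicontinuity of $c$; the only difference is that you split on whether $c(x,x')$ is finite while the paper splits on whether $\lim_n c_n(x,x')$ is finite, which is immaterial. Your closing remark that only lower semicontinuity and $c(x,x)=0$ are used here, and not the compactness part of Assumption \ref{assump}, is also accurate.
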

\begin{proof}
Items 1. and 2. are straightforward to prove. To prove item 3., notice that due to the monotonicity of the cost functions we know that $\lim_{n \rightarrow \infty} c_n(x,x')$ exists in $[0,\infty]$ and $\lim_{n \rightarrow \infty} c_n(x,x')  \leq c(x,x') $. If $\lim_{n \rightarrow \infty} c_n(x,x')=\infty$, then we would be done. Hence, we may assume that $\lim_{n \rightarrow \infty} c_n(x,x') < \infty$. From the definition of $c_n$ it then holds that $\lim_{n \rightarrow \infty} \tilde c_n(x,x') = \lim_{n \rightarrow \infty}  c_n(x,x')  < \infty$. Let $(x_n, x_n' ) \in \X \times \X$ be such that
\[   c(x_n, x_n') + nd(x,x_n) + nd(x',x_n') \leq \tilde{c}_n(x,x') + \frac{1}{n}. \]
Since $c(x_n, x_n') \geq 0$, the above implies that $\lim_{n \rightarrow \infty} d(x,x_n) =0 $ and $\lim_{n \rightarrow \infty} d(x',x_n') =0$. Indeed, if this was not the case, then we would contradict $\lim_{n \rightarrow \infty} \tilde c _n(x,x') <\infty$. By the lower semicontinuity of the cost function $c$ we then conclude that
\begin{align*}
    c(x, x') \leq \liminf_{n \rightarrow \infty } c(x_n, x_n')  & \leq \liminf_{n \rightarrow \infty} c(x_n, x_n') + nd(x,x_n) + nd(x',x_n') \leq \liminf_{n \rightarrow \infty} \tilde c_n(x,x') 
    \\ &= \lim_{n \rightarrow \infty } c_n(x,x') \leq c(x,x'), 
\end{align*}
from where the desired claim follows.
\end{proof}

\begin{lemma}\label{lem : monotone convergence c_A}
Let $c$ be a cost function satisfying Assumption \ref{assump}, and let $c_n$ be the cost function defined in Lemma \ref{lem:ApproximationCost}. For each $A \in S_K$, let 
\begin{equation*}
    c_{A,n}(x_A) := \inf_{x' \in \X} \sum_{i \in A} c_n(x', x_i), \text{ and }  c_A(x_A) := \inf_{x' \in \X} \sum_{i \in A} c(x', x_i),
\end{equation*}
where we use the shorthand notation $x_A= (x_i)_{i \in A}$. Then $c_{A,n}$ monotonically converges toward $c_A$ pointwise for all $A \in S_K$, as $n \rightarrow \infty$. 
\end{lemma}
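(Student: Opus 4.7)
The plan is to first establish monotonicity so the pointwise limit $L(x_A) := \lim_n c_{A,n}(x_A)$ exists with $L(x_A) \leq c_A(x_A)$, and then show the reverse inequality by extracting a limit point of approximate minimizers via the compactness property in Assumption \ref{assump}.

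First, I would note that since $c_n \leq c_{n+1} \leq c$ by Lemma \ref{lem:ApproximationCost}, taking infima over $x' \in \mathcal{X}$ of $\sum_{i\in A} c_n(x', x_i)$ yields the pointwise monotonicity $c_{A,n}(x_A) \leq c_{A,n+1}(x_A) \leq c_A(x_A)$. Hence $L(x_A)$ exists and $L(x_A) \leq c_A(x_A)$. If $L(x_A) = \infty$ there is nothing left to prove, so I will assume $L(x_A) < \infty$.

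Next, for each $n$ choose an approximate minimizer $x'_n \in \mathcal{X}$ with $\sum_{i \in A} c_n(x'_n, x_i) \leq c_{A,n}(x_A) + 1/n$. For all $n$ large enough we then have $c_n(x'_n, x_i) \leq L(x_A) + 1 < n$ for every $i \in A$, so $c_n(x'_n, x_i) = \tilde c_n(x'_n, x_i)$. Fixing one $i_0 \in A$ and unfolding the definition of $\tilde c_n$, I can pick $(\tilde y_n, \tilde z_n) \in \mathcal{X} \times \mathcal{X}$ with
\[
c(\tilde y_n, \tilde z_n) + n\, d(x'_n, \tilde y_n) + n\, d(x_{i_0}, \tilde z_n) \leq L(x_A) + 2.
\]
The three summands are nonnegative, so $d(\tilde z_n, x_{i_0}) \to 0$, $d(\tilde y_n, x'_n) \to 0$, and $c(\tilde y_n, \tilde z_n)$ stays bounded. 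Since $\{\tilde z_n\}$ is bounded (it converges to $x_{i_0}$) and $\sup_n c(\tilde y_n, \tilde z_n) < \infty$, Assumption \ref{assump} guarantees that $\{(\tilde y_n, \tilde z_n)\}$ is precompact. Passing to a subsequence (not relabeled), $\tilde y_n \to y_*$ and therefore $x'_n \to y_*$ as well, by the triangle inequality.

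Finally, I pass to the limit to obtain $c_A(x_A) \leq L(x_A)$. For any fixed $m \in \mathbb{N}$, the monotonicity $c_m \leq c_n$ for $n \geq m$ combined with the continuity of $c_m$ (Lemma \ref{lem:ApproximationCost}(1)) and $x'_n \to y_*$ gives
\[
\sum_{i \in A} c_m(y_*, x_i) = \lim_{n \to \infty} \sum_{i \in A} c_m(x'_n, x_i) \leq \liminf_{n\to\infty}\sum_{i \in A} c_n(x'_n, x_i) \leq L(x_A).
\]
Sending $m \to \infty$ and invoking the pointwise monotone convergence $c_m(y_*, x_i) \uparrow c(y_*, x_i)$ from Lemma \ref{lem:ApproximationCost}(3) yields $\sum_{i \in A} c(y_*, x_i) \leq L(x_A)$, so that $c_A(x_A) \leq \sum_{i\in A} c(y_*, x_i) \leq L(x_A)$, completing the proof.

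The main subtlety, and the only place where anything nontrivial happens, is extracting a limit point of the approximate minimizers $\{x'_n\}$: a direct bound on $c(x'_n, x_{i_0})$ is not available (only $c_n(x'_n, x_{i_0})$ is controlled), so one must pass through the auxiliary pair $(\tilde y_n, \tilde z_n)$ realizing $\tilde c_n$ and then exploit Assumption \ref{assump} on this genuine $c$-bounded sequence. Everything else reduces to monotonicity and routine passages to the limit.
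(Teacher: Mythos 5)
Your proof is correct and follows essentially the same route as the paper's: monotonicity from Lemma \ref{lem:ApproximationCost} gives $c_{A,n}\uparrow L\le c_A$, and the reverse inequality is obtained from approximate minimizers $x'_n$, unfolding $\tilde c_n$ and invoking Assumption \ref{assump} (together with the symmetry of $c$) to extract a limit point $y_*$ of the $x'_n$. The only difference is cosmetic and lies in the closing step: the paper unfolds $\tilde c_n$ at every index $i\in A$ and applies lower semicontinuity of $c$ to the auxiliary pairs $(x'_{n,i},x_{n,i})$, whereas you unfold only at one index $i_0$ and instead conclude via $c_m\le c_n$, continuity of $c_m$, and the monotone convergence $c_m\uparrow c$ from Lemma \ref{lem:ApproximationCost}; both finishes are valid.
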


\begin{proof}
Fix $A \in S_K$ and $x_A:=(x_i)_{i \in A} \in \mathcal{X}^{|A|}$. From Lemma \ref{lem:ApproximationCost} it follows $c_{A,n} \leq c_{A, n+1} \leq c_{A} $. Therefore, for a given $x_A$, $\lim_{n \rightarrow \infty} c_{A,n}(x_A)  $ exists in $[0,\infty]$ and is less than or equal to $c_A(x_A)$. If the limit is $\infty$, we are done. We can then assume without the loss of generality that $\lim_{n \rightarrow \infty} c_{A, n}(x_A) <\infty$. We can then find sequences $\{ x_{n,i} \}_{n \in \N}$, $\{  x_{n,i}' \}_{n \in \N}$, and $\{ x_n' \}_{n \in \N}$ such that for all large enough $n \in \N$ 
\[ \sum_{i \in A} c( x_{n,i}',x_{n,i}) + n ( \sum_{i\in A} ( d(x_{n,i}, x_i) + d(x_{n,i}', x_n') ))  \leq c_{A,n}(x_A) +  \frac{1}{n}.   \]
From the above we derive that $\lim_{n \rightarrow \infty} d(x_{n,i}', x_n') =0$ and $\lim_{n \rightarrow \infty} d(x_{n,i}, x_i) =0$. Hence, it follows that $\limsup_{n \rightarrow \infty} c( x_{n,i}',x_{n,i}) < \infty$. Combining the previous facts with Assumption \ref{assump}, we conclude that $\{ x_{n}' \}_{n \in \N}$ is precompact, and thus, up to subsequence (that we do not relabel), we have $\lim_{n \rightarrow \infty }d(x'_{n}, \hat{x}) =0 $ for some $\hat x \in \X$. Combining with  $\lim_{n \rightarrow \infty} d(x_{n,i}', x_n') =0$ we conclude that $\lim_{n \rightarrow \infty} d(x_{n,i}', \hat x) =0$ for all $i \in A$. Using the lower semi-continuity of $c$ we conclude that
\[ c_A(x_A) \leq \sum_{i\in A} c(\hat{x}, x_i) \leq \liminf_{n \rightarrow \infty} \sum_{i\in A} c(x_{n,i}', x_{n,i}) \leq \lim_{n \rightarrow \infty} c_{A,n}(x_A) \leq c_{A}(x_A).      \]

\end{proof}


\begin{proposition}\label{prop:optimality_weak_convergence_dual}
Let $c$ be a cost function satisfying Assumption \ref{assump}. Then there exists a solution for \eqref{eq:mot_decomposed_dual_Linfty}.
\end{proposition}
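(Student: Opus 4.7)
The plan is to approximate the lower semi-continuous cost $c$ by the bounded Lipschitz costs $c_n$ of Lemma \ref{lem:ApproximationCost}, apply the last part of Theorem \ref{thm:learner_part} to produce optimal dual potentials $g^n$ for each $c_n$, and then extract a weak-$*$ limit to obtain a solution of \eqref{eq:mot_decomposed_dual_Linfty}. Concretely, for each $n$ I take $g^n = (g_1^n, \ldots, g_K^n) \in \mathcal{C}_b(\X)^K$ optimal for \eqref{eq:mot_decomposed_dual} with cost $c_n$ and normalized to satisfy $0 \le g_i^n \le 1$ pointwise. Since $c_{A,n} \le c_A$ for every $A \in S_K$ by Lemma \ref{lem : monotone convergence c_A}, each $g^n$ is in particular feasible for \eqref{eq:mot_decomposed_dual_Linfty} with the original cost $c$. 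Banach--Alaoglu applied to the uniformly bounded sequence $\{g_i^n\}_n$ in $L^\infty(\X; \mu_i)$ yields, after successive extraction for $i=1, \ldots, K$, a common subsequence along which $g_i^n \rightharpoonup^* g_i^*$ with $0 \le g_i^* \le 1$.

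To verify feasibility of $g^*$ for \eqref{eq:mot_decomposed_dual_Linfty}, I view $(x_1, \ldots, x_K) \mapsto g_i^n(x_i)$ as an element of $L^\infty(\X^K; \otimes_j \mu_j)$; Fubini shows the weak-$*$ convergence is preserved in this larger space. Fix $A \in S_K$ and $M < \infty$. For any measurable $E \subset \{c_A \le M\}$, testing against $\mathbf{1}_E$ and passing to the limit in the pointwise bound $\sum_{i \in A} g_i^n(x_i) \le 1 + c_{A,n}(x) \le 1 + c_A(x)$ gives
\begin{equation*}
\int_E \sum_{i \in A} g_i^*(x_i)\, d\otimes_j \mu_j \;\le\; \int_E (1 + c_A)\, d\otimes_j \mu_j,
\end{equation*}
and since $E \subset \{c_A \le M\}$ and $M$ are arbitrary, this produces the $\otimes_j \mu_j$-a.e. constraint $\sum_{i \in A} g_i^*(x_i) \le 1 + c_A(x_1, \ldots, x_K)$.

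It remains to show optimality of $g^*$. By Theorems \ref{thm:generalized_barycenter_MOT} and \ref{thm:learner_part} applied to the bounded Lipschitz cost $c_n$, the value $V_n := \sum_i \int g_i^n\, d\mu_i$ equals the primal value of \eqref{eq:multimarginal_decomposed} with cost $c_n$. The weak-$*$ limit gives $V_n \to \sum_i \int g_i^*\, d\mu_i$, so it suffices to prove $V_n \to V$, where $V$ denotes the primal value with cost $c$. The inequality $V_n \le V$ is immediate from $c_{A,n} \le c_A$. For the reverse, let $\{\pi_A^n\}_{A \in S_K}$ be an optimizer for $V_n$; the marginal constraints force each $P_{i\#}\pi_A^n \le \mu_i$, so the $\pi_A^n$ are tight, and Prokhorov provides a narrowly convergent subsequence $\pi_A^n \rightharpoonup \pi_A^*$ whose marginal constraints pass to the limit, so that $\{\pi_A^*\}$ is feasible for the primal with cost $c$. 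For each fixed $m$, $c_{A,m}$ is bounded and Lipschitz (as the infimum over $x' \in \X$ of $\sum_{i \in A} c_m(x',x_i)$, which is Lipschitz in $x_A$ with constant independent of $x'$), and $c_{A,n} \ge c_{A,m}$ for $n \ge m$, so
\begin{equation*}
\liminf_n \int c_{A,n}\, d\pi_A^n \;\ge\; \lim_n \int c_{A,m}\, d\pi_A^n \;=\; \int c_{A,m}\, d\pi_A^*.
\end{equation*}
Sending $m \to \infty$ and invoking Lemma \ref{lem : monotone convergence c_A} together with monotone convergence yields $\liminf_n V_n \ge \sum_{A \in S_K} \int (1 + c_A)\, d\pi_A^* \ge V$, as desired. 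The main obstacle is this simultaneous passage to the limit on both the primal and the dual side; the approximation by bounded Lipschitz $c_n$ neatly resolves it, since for each $n$ strong duality is already granted by Theorem \ref{thm:learner_part}, and the monotone structure $c_{A,n} \nearrow c_A$ propagates the identity $V_n = \sum_i \int g_i^n\, d\mu_i$ to the limit.
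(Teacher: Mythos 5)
Your proof is correct, and while the overall scaffolding (approximate $c$ by the bounded Lipschitz $c_n$ of Lemma \ref{lem:ApproximationCost}, take optimal $g^n\in\mathcal{C}_b(\X)^K$ with $0\le g^n_i\le 1$, extract a weak$^*$ limit, get feasibility from $c_{A,n}\le c_A$) coincides with the paper's, the key step --- convergence of the optimal values --- is handled by a genuinely different route. The paper proves $\alpha_n\to\alpha$ on the generalized-barycenter side \eqref{eq:generalized_barycenter}: it takes optimal $(\lambda^n,\widetilde\mu^n_1,\dots,\widetilde\mu^n_K)$ for cost $c_n$ together with optimal couplings $\pi^n_i$, and must prove tightness of $\{\widetilde\mu^n_i\}_n$ by a bespoke argument (a compact set built from sublevel sets of $c_{n_0}$, invoking the compactness part of Assumption \ref{assump}, plus the explicit representation of $\lambda^n$ as the maximum of the densities) before passing to the limit with monotone convergence. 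You instead pass to the limit in the stratified MOT formulation \eqref{eq:multimarginal_decomposed}, where tightness is essentially free: the marginal constraints dominate each relevant marginal of $\pi^n_A$ by the fixed tight measure $\mu_i$, so Prokhorov applies directly, and the monotonicity $c_{A,m}\nearrow c_A$ of Lemma \ref{lem : monotone convergence c_A} does the rest. This buys a shorter and more transparent compactness argument (no ad hoc sublevel-set construction), at the price of working with the stratified couplings; the paper's route, by contrast, produces along the way the limiting adversarial objects $(\lambda,\widetilde\mu_i,\pi_i)$, which are reused elsewhere. Two small points you should make explicit: (i) since $\pi_A$ lives on $\X^K$ but only the coordinates in $A$ are constrained (cf.\ Remark 3.4 of the paper), tightness holds only after projecting $\pi^n_A$ onto the coordinates in $A$ (or fixing the remaining coordinates), which is harmless; (ii) the convergence of the mass term $\sum_A \pi^n_A$ follows from testing narrow convergence against the constant function $1$, which is where tightness (rather than mere vague convergence) is needed. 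Also note that both you and the paper apply attainment/duality results stated under Assumption \ref{assump} to the bounded costs $c_n$, which strictly speaking fail the compactness clause of that assumption; this is equally benign in both arguments (the cited results hold for bounded Lipschitz costs, and near-optimizers would do), so it is not a point of difference.
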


\begin{proof}
Let $\{c_n\}_{n \in \N}$ be the sequence of cost functions introduced in Lemma \ref{lem:ApproximationCost}. Notice that for each $n \in \mathbb{N}$ there is a solution $g^n=(g^n_1, \dots, g^n_K) \in \mathcal{C}_b(\mathcal{X})^K$ for the problem \eqref{eq:mot_decomposed_dual} (with cost $c_n$) that can be assumed to satisfy $0 \leq g^n_i \leq 1$ for each $i \in \Y$. Therefore, for each $i \in \Y$ the sequence $\{ g_n^i \}_{n \in \N}$ is weakly$^*$ precompact in $L^\infty(\X; \mu_i)$. This implies that there exists a subsequence of $\{g^n\}_{n \in \N}$ (not relabeled) for  which $g^n$ weakly$^*$ converges toward some $g^* \in \prod_{i \in \Y} L^{\infty}(\mathcal{X}; \mathbb{R}, \mu_i)$, which would necessarily satisfy $0 \leq g_i^*\leq 1 $ for all $i \in \Y$; see section \ref{sec:WeakStarTopology} for the definition of weak$^*$ topologies. We claim that this $g^*$ is feasible for \eqref{eq:mot_decomposed_dual_Linfty}. Indeed, by Lemma \ref{lem : monotone convergence c_A} we know that $c_{A,n} \leq c_A$ for all $A \in S_K$. In particular, since $c_{A, n} \leq c_A$, and $\sum_{i \in A} g^n_i(x_i) \leq 1 + c_{A,n}\leq 1 + c_A$ for all $A \in S_K$ and all $n \in \mathbb{N}$, it follows that $\sum_{i \in A} g^*_i(x_i) \leq 1 + c_A$,  $\otimes_{i } \mu_i$-almost everywhere, due to the weak$^*$ convergence of $g_i^n$ toward $g_i^*$. This verifies that $g^*$ is indeed feasible for \eqref{eq:mot_decomposed_dual_Linfty}.

Let $\alpha_n$ and $\beta_n$ be the optimal values of \eqref{eq:generalized_barycenter} and \eqref{eq:mot_decomposed_dual}, respectively, for the cost $c_n$. Likewise, let $\alpha$ and $\beta$ be the optimal values of \eqref{eq:generalized_barycenter} and \eqref{eq:mot_decomposed_dual}, respectively, for the cost $c$. Recall that, thanks to Theorem \ref{thm:generalized_barycenter_MOT} and Theorem \ref{thm:learner_part}, we have $\alpha_n = \beta_n $ for all $n \in \N$ and $\alpha=\beta$. Suppose for a moment that we have already proved that $\lim_{n \rightarrow \infty} \alpha_n = \alpha$. Then we would have
\[  
\sum_{i \in \Y} \int_{\X} g_i^*(x) d\mu_i(x)  = \lim_{n \rightarrow \infty} \sum_{i \in \Y} \int_{\X} g_i^n(x) d\mu_i(x) = \lim_{n \rightarrow \infty} \beta_n   = \lim_{n \rightarrow \infty} \alpha_n = \alpha,    
\]
which would imply that $g^*$ is optimal for \eqref{eq:mot_decomposed_dual_Linfty}.

It thus remains to show that $\lim_{n \rightarrow \infty} \alpha_n = \alpha$. Given that $c_n \leq  c_{n+1} \leq c$, it follows that $\alpha_n \leq \alpha_{n+1} \leq \alpha$. In particular, the limit $\lim_{n \rightarrow \infty} \alpha_n $ exists in $[0,\infty]$ and must satisfy $\lim_{n \rightarrow \infty} \alpha_n \leq \alpha$. If the limit is $\infty$, then there is nothing to prove. Thus we can assume without the loss of generality that $\alpha_\infty:= \lim_{n \rightarrow \infty} \alpha_n < \infty$. 

Let $\lambda^n$ and $\widetilde{\mu}_1^n,\ldots, \widetilde{\mu}_K^n$ be an optimal solution of \eqref{eq:generalized_barycenter} with the cost $c_n$ and let $\pi^n_i$ be a coupling realizing $C(\mu_i, \tilde{\mu}_i^n)$. We first claim that $\{\tilde{\mu}_i^n \}_{n \in \N}$ is weakly precompact for each $i\in \Y$. To see this, notice that for every $n$ we have $\tilde{\mu}_i^n(\X)= \mu_i(\X)\leq 1$, for otherwise $C(\mu_i, \widetilde \mu_i^n )=\infty$. Thus, by Prokhorov theorem it is enough to show that for every $\eta>0$ there exists a compact set $\mathcal{K} \subseteq \X$ such that $\tilde{\mu}^n_i( \X \setminus \mathcal{K} ) \leq C\eta$ for all $n \in \N$ and some $C$ independent of $n, \eta$ or $\mathcal{K}$. To see that this is true, let us start by considering a compact set $G$ such that $ \mu_i(G^c) \leq \eta$.
Let $n_0 \in \N $ be such that $n_0 -1 > \frac{1}{\eta}$. For $n \geq n_0$ we have
\[  \alpha_\infty\geq \alpha_n = \lambda_n(\X) + \sum_{i \in \Y}\int_{\X}\int_{\X} c_n(x_i, \tilde x_i) d \pi_i^n(x_i, \tilde x _i) \geq \int_{G}\int_{\X} c_{n_0}(x_i, \tilde x_i) d \pi_i^n(x_i, \tilde x _i) . \]
Consider the set
\[  \tilde{\mathcal{K}}:= \{ x \in \X \text{ s.t. }  \inf_{\tilde x \in G } c_{n_0}(x, \tilde x) \leq n_0 -1   \};\]
using the definition of $c_{n_0}$ and Assumption \ref{assump} it is straightforward to show that $ \tilde{\mathcal{K}} $ is a compact subset of $\X$. We see that $\alpha_\infty \geq \frac{1}{\eta} ( \tilde{\mu}_i^n (\tilde{\mathcal{K}}^c) - \mu_i(G^c) )$, from where we can conclude that $\tilde{\mu}_i^n(\tilde{\mathcal{K}}^c) \leq (\alpha_\infty + 1)\eta$ for all $n \geq n_0$.  We now consider a compact set $\hat {\mathcal{K}}$ for which $\tilde{\mu}_i^n(\hat{\mathcal{K}}^c) \leq \eta$ for all $n=1, \dots, n_0$, and set $\mathcal{K}:= \tilde {\mathcal{K}} \cup \hat{\mathcal{K}}$, which is compact. Then for all $n \in \N$ we have $\widetilde \mu_i^n(\mathcal{K}^c) \leq (\alpha_\infty +1) \eta$. This proves the desired claim. 

Now, without the loss of generality, we can assume that $\lambda^n$ has the form 
\[  
d \lambda^n(x) = \max_{i=1, \dots, K} \left\{ \frac{d\widetilde \mu_i^n}{d\overline{\mu}^n } (x) \right\} d \overline{\mu}^n(x),    
\]
where $\overline{\mu}^n(x) = \sum_{i=1}^K \widetilde \mu_i^n$. Indeed, notice that the above is the smallest positive measure greater than $\tilde{\mu}_1^n, \dots, \tilde{\mu}_K^n$. Given the form of $\lambda^n$ and the weak precompactness of each of the sequences $\{ \tilde{\mu}_i^n \}_{n \in \N}$, we can conclude that $\{ \lambda^n\}_{n \in \N}$ is weakly precompact and so are the sequences $\{\pi_i^n\}_{n \in \N}$. We can thus assume that, up to subsequence, $\widetilde \mu_i^n $ converges weakly toward some $\widetilde \mu_i$; $\pi_i^n$ converges weakly toward some $\pi_i \in \Gamma(\mu_i, \widetilde \mu_i)$; and $\lambda^n$ converges weakly toward some $\lambda$ satisfying $\lambda \geq \widetilde \mu_i$ for each $i\in \Y$. In particular, $\lambda, \tilde{\mu}_1, \dots, \tilde{\mu}_K$ is feasible for \eqref{eq:generalized_barycenter}.  

Therefore, for all $n_0 \in \N$ we have
\begin{align*}
\alpha \geq \alpha_\infty & = \lim_{n \rightarrow \infty }  \left(\lambda^n(\X) + \sum_{i\in \Y} \int_\X \int_\X c_n(x_i, \tilde x_i) d\pi_i^n(x_i, \tilde x_i) \right)\\ 
&\geq \lim_{n \rightarrow \infty} \left(\lambda^n(\X) + \sum_{i\in \Y} \int_\X \int_\X c_{n_0}(x_i, \tilde x_i) d\pi_i^n(x_i, \tilde x_i) \right)\\
& \geq  \lambda(\X) + \sum_{i\in \Y} \int_\X \int_\X c_{n_0}(x_i, \tilde x_i) d\pi_i(x_i, \tilde x_i).
\end{align*}
Sending $n_0\rightarrow \infty$, we can then use the monotone convergence theorem to conclude that
\[ \alpha \geq \alpha_\infty \geq \lambda(\X) + \sum_{i\in \Y} \int_\X \int_\X c(x_i, \tilde x_i) d\pi_i(x_i, \tilde x_i) \geq \lambda(\X) + \sum_{i\in \Y} C(\mu_i , \widetilde \mu_i)   \geq  \alpha. \]
This proves that $\alpha_\infty=\alpha$.
\end{proof}

\subsection{From dual potentials to robust classifiers for continuous cost functions}

Having discussed the existence of solutions $g^*$ for \eqref{eq:mot_decomposed_dual_Linfty}, we move on to discussing the connection between $g^*$ and solutions $f^*$ of problem \eqref{def:DRO_model}.



{
\begin{proposition}[Originally Corollary 33 in \cite{trillos2023multimarginal}; see correction in Corollary 4.7. and Remark 4.9 in \cite{NGTJakwangMattArxiv}]\label{prop:OptimalPair}
Let $c: \X \times \X \rightarrow [0,\infty]$ be a lower semi-continuous function and suppose that $(\widetilde{\mu}^*, g^*)$ is a solution pair for the generalized barycenter problem \eqref{eq:generalized_barycenter} and the dual of its MOT formulation \eqref{eq:mot_decomposed_dual_Linfty}. Let $f^*$ be defined as 
\begin{equation}
\label{eq:f_i*}
    f^*_i(\widetilde{x}):= \max \left\{ \sup_{x \in \spt(\mu_i)} \left\{ g^*_i(x) - c(x, \widetilde{x}) \right\}, 0 \right\},
\end{equation}
for each $i \in \mathcal{Y}$. 

If $f^*$ is Borel-measurable, then $(f^*, \widetilde{\mu}^*)$ is a saddle solution for the problem \eqref{def:DRO_model}. In particular, $f^*$ is a minimizer of \eqref{def:DRO_model}. 
\end{proposition}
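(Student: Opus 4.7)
The plan is to exploit the strong-duality package already assembled in Section \ref{section : existence dual potential}. Let $\alpha$ denote the common optimal value of \eqref{eq:generalized_barycenter}, \eqref{eq:multimarginal_decomposed}, \eqref{eq:barycenter_dual}, and \eqref{eq:mot_decomposed_dual_Linfty}; by the theorems quoted there, $\alpha = \sum_{i \in \Y}\int g^*_i\,d\mu_i = \lambda^*(\X) + C(\mu,\tilde\mu^*)$ for an optimal $\lambda^*$ of the barycenter problem, and the DRO value equals $1 - \alpha$. My route has three steps: (i) verify $f^* \in \F$, with the only nontrivial point being the simplex condition $\sum_i f^*_i \leq 1$; (ii) establish $\sum_i\int f^*_i\,d\tilde\mu^*_i = \lambda^*(\X)$; (iii) combine these to read off both saddle-point inequalities.

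For step (i), non-negativity is immediate from the truncation at $0$. For the simplex bound, I would fix $\tilde x \in \X$, let $A := \{i : f^*_i(\tilde x) > 0\}$, and for each $i \in A$ choose $x_i \in \spt(\mu_i)$ nearly attaining the supremum in \eqref{eq:f_i*}. Plugging the competitor $x' = \tilde x$ into the definition of $c_A$ and using the symmetry of $c$ from Assumption \ref{assump}, one gets $c_A(x_A) \leq \sum_{i\in A} c(x_i,\tilde x)$. The MOT-dual constraint $\sum_{i\in A} g^*_i(x_i) \leq 1 + c_A(x_A)$ then forces $\sum_{i \in A}(g^*_i(x_i) - c(x_i,\tilde x)) \leq 1$; letting the approximation parameter go to zero gives $\sum_i f^*_i(\tilde x) \leq 1$. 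A subtle point is that the MOT constraint in \eqref{eq:mot_decomposed_dual_Linfty} is only almost-everywhere valid, so I would first replace $g^*$ by its $\bar c c$-transform (admissible by the last part of Theorem \ref{thm:learner_part}); this $\bar c c$-representative is lower semi-continuous and makes the constraint hold pointwise on $\prod_i \spt(\mu_i)$. This measure-theoretic adjustment is the main obstacle in the argument and is presumably what the correction cited in \cite{NGTJakwangMattArxiv} is patching.

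For step (ii), let $\pi^*_i \in \Gamma(\mu_i, \tilde\mu^*_i)$ realize $C(\mu_i,\tilde\mu^*_i)$. Since the first marginal of $\pi^*_i$ is $\mu_i$, the bound $f^*_i(\tilde x) \geq g^*_i(x) - c(x,\tilde x)$ holds $\pi^*_i$-almost everywhere, and integrating gives
\begin{equation*}
\int f^*_i\,d\tilde\mu^*_i = \int f^*_i(\tilde x)\,d\pi^*_i(x,\tilde x) \geq \int g^*_i\,d\mu_i - C(\mu_i,\tilde\mu^*_i).
\end{equation*}
Summing over $i$ yields $\sum_i\int f^*_i\,d\tilde\mu^*_i \geq \alpha - C(\mu,\tilde\mu^*) = \lambda^*(\X)$. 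Conversely, using $\tilde\mu^*_i \leq \lambda^*$, $f^*_i \geq 0$, and $\sum_i f^*_i \leq 1$ from step (i), one has $\sum_i\int f^*_i\,d\tilde\mu^*_i \leq \int\sum_i f^*_i\,d\lambda^* \leq \lambda^*(\X)$. These matching bounds force equality and in particular $R(f^*,\tilde\mu^*) - C(\mu,\tilde\mu^*) = 1 - \lambda^*(\X) - C(\mu,\tilde\mu^*) = 1 - \alpha$.

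Step (iii) then extracts the two saddle inequalities in \eqref{eq: saddle_point_inequality}. The right inequality is immediate: the same upper-bound argument applied to any $f \in \F$ yields $\sum_i\int f_i\,d\tilde\mu^*_i \leq \lambda^*(\X) = \sum_i\int f^*_i\,d\tilde\mu^*_i$, so $R(f,\tilde\mu^*) \geq R(f^*,\tilde\mu^*)$. For the left inequality, note that for every $\tilde x \in \X$ and every $x \in \spt(\mu_i)$ one has $f^*_i(\tilde x) + c(x,\tilde x) \geq g^*_i(x)$; hence the $c$-transform $(f^*_i)^c(x) := \inf_{\tilde x}[f^*_i(\tilde x) + c(x,\tilde x)]$ satisfies $(f^*_i)^c \geq g^*_i$ on $\spt(\mu_i)$, so $\sum_i\int (f^*_i)^c\,d\mu_i \geq \alpha$. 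Combined with the standard disintegration identity $\sup_{\tilde\mu}[R(f,\tilde\mu) - C(\mu,\tilde\mu)] = 1 - \sum_i\int f^c_i\,d\mu_i$, which is obtained by pushing the sup inside each integral over a coupling with first marginal $\mu_i$, this gives $\sup_{\tilde\mu}[R(f^*,\tilde\mu) - C(\mu,\tilde\mu)] \leq 1 - \alpha = R(f^*,\tilde\mu^*) - C(\mu,\tilde\mu^*)$, which is precisely the left saddle inequality.
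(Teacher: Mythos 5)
The paper does not actually reprove this proposition (it is imported from Corollary 33 of \cite{trillos2023multimarginal} and its correction in \cite{NGTJakwangMattArxiv}), but your skeleton is the natural one and matches how the paper deploys these objects elsewhere: verify $f^*\in\F$ from the stratified dual constraints, establish the complementary-slackness identity $\sum_{i}\int f_i^*\,d\widetilde{\mu}_i^*=\lambda^*(\X)$ using an optimal coupling on one side and $\widetilde{\mu}_i^*\leq\lambda^*$, $0\leq\sum_i f_i^*\leq 1$ on the other, and then read off the two inequalities in \eqref{eq: saddle_point_inequality}. Your steps (ii) and (iii) are essentially correct, and the coupling computation in step (ii) is the same device the paper itself uses in the proof of Theorem \ref{thm : measurable robust classifier}.

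The genuine gap is the representative-switching manoeuvre in step (i). The last statement of Theorem \ref{thm:learner_part} (that one may restrict to $g$ with $g_i=g_i^{\overline{c}c}$) is proved only for bounded Lipschitz costs; for a merely lower semi-continuous $c$ the $\overline{c}$- and $c$-transforms are pointwise suprema/infima over uncountable families and need not be lower semi-continuous or even Borel --- this is precisely the measurability obstruction the whole paper is built to circumvent, so it cannot be invoked as a routine adjustment. More fundamentally, replacing $g^*$ by another representative changes the classifier: $f_i^*$ in \eqref{eq:f_i*} is a pointwise supremum over $\spt(\mu_i)$ and is therefore sensitive to modifying $g_i^*$ on $\mu_i$-null subsets of the support, so a proof for the modified potentials does not prove the statement for the $f^*$ in the proposition. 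Indeed, if one takes an $L^\infty$-solution whose representative violates the constraints at a single non-atomic point of $\spt(\mu_i)$, the resulting $f^*$ can exceed $1$ and fail to lie in $\F$, so no change-of-representative argument can close this step; the proposition has to be read, as it is used in the paper (where the $g^n$ are continuous and feasible for \eqref{eq:mot_decomposed_dual} at every point), with $g^*$ satisfying the constraints pointwise on $\prod_i\spt(\mu_i)$. Under that reading your core argument --- near-optimal $x_i\in\spt(\mu_i)$, competitor $x'=\widetilde{x}$ in $c_A$, symmetry from Assumption \ref{assump} --- is exactly right and the detour is unnecessary. A secondary, repairable blemish: in the left saddle inequality you invoke the identity $\sup_{\widetilde{\mu}}\{R(f^*,\widetilde{\mu})-C(\mu,\widetilde{\mu})\}=1-\sum_i\int (f_i^*)^c\,d\mu_i$, but $(f_i^*)^c$ is in general only universally measurable (so the right-hand side is not defined against the Borel measures $\mu_i$ without the machinery of $\overline{\mu}_i$), and the direction ``$\geq$'' needs a measurable-selection argument. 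You only need the upper bound, which follows directly, with no $c$-transform at all, by integrating the pointwise inequality $f_i^*(\widetilde{x})+c(x,\widetilde{x})\geq g_i^*(x)$, valid for $x\in\spt(\mu_i)$, against (near-)optimal couplings $\pi_i\in\Gamma(\mu_i,\widetilde{\mu}_i)$, exactly as you already did in step (ii).
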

}


The reason why we can not directly use Proposition \ref{prop:OptimalPair} to prove existence of solutions to \eqref{def:DRO_model} for arbitrary $c$ and $\mu$ is because it is a priori not guaranteed that $f_i^*$, as defined in \eqref{eq:f_i*}, is Borel measurable; notice that the statement in Proposition \ref{prop:OptimalPair} is conditional. If $\spt(\mu_i)$ was finite for all $i$, then the Borel measurability of $f_i^*$ would follow immediately from the fact that the maximum of finitely many lower semi-continuous functions is Borel; this is of course the case when working with empirical measures. Likewise, the Borel measurability of $f_i^*$ is guaranteed when $\mu$ is arbitrary and $c$ is a bounded Lipschitz function (in fact, it is sufficient for the cost to be continuous), as is discussed in Definitions 5.2 and 5.7 and Theorem 5.10 in \cite{Oldandnew}. However, nothing can be said about the Borel measurability of $f_i^*$ without further information on $g_i^*$ (which in general is unavailable) when $c$ is only assumed to be lower semi-continuous (as is the case for the cost $c_\varepsilon$ from \eqref{def:CostEpsilon})
 and $\spt(\mu_i)$ is an uncountable set.

Our strategy to prove Theorem \ref{thm : measurable robust classifier} in section \ref{sec:WellPosednes} will be to approximate an arbitrary cost function $c$ from below with a suitable sequence of bounded and Lipschitz cost functions $c_n$ (the costs defined in Lemma \ref{lem:ApproximationCost}), and, in turn, consider a limit of the robust classifiers $f_n^*$ associated to each of the $c_n$. This limit ($\limsup$, to be precise) will be our candidate solution for \eqref{def:DRO_model}.


\section{Proofs of our main results}\label{section : Borel robust classifier}
In this section we prove the existence of a Borel measurable robust classifier for problem \eqref{def:DRO_model} when $c$ is an arbitrary lower semi-continuous cost function satisfying Assumption \ref{assump}. We also establish the existence of minimizers of \eqref{def:closed-ball_model-corrected} and establish Theorem \ref{thm:AllmodelsSame} and Corollary \ref{cor:BinaryCase}.

\subsection{Well-posedness of the DRO model}
\label{sec:WellPosednes}

\begin{proof}[Proof of Theorem \ref{thm : measurable robust classifier}]

Let $\{c_n\}_{n \in \N}$ be the sequence of cost functions converging to $c$ from below defined in Lemma \ref{lem:ApproximationCost}. For each $n \in \mathbb{N}$, we use Theorem \ref{thm:learner_part} and let $g^n=(g^n_1, \dots, g^n_K) \in \mathcal{C}_b(\mathcal{X})^K$ be a solution of \eqref{eq:mot_decomposed_dual} with cost $c_n$; recall that we can assume that $0 \leq g^n_i \leq 1$.  In turn, we use $g^n$ and the cost $c_n$ to define $f^n:=(f^n_1, \dots, f^n_K)$ following \eqref{eq:f_i*}. Since the $g_i^n$ and $c_n$ are continuous, and given that the pointwise supremum of a family of continuous functions is lower semi-continuous, we can conclude that $f^n_i$ is lower semi-continuous and thus also Borel measurable for each $n \in \mathbb{N}$. Thanks to Proposition \ref{prop:OptimalPair}, $f^n$ is optimal for \eqref{eq:barycenter_dual} with cost function $c_n$.

From the proof of Proposition \ref{prop:optimality_weak_convergence_dual}, we know that there exists a subsequence (that we do not relabel) such that the $g_i^n$ converge in the weak$^*$ topology, as $n \rightarrow \infty$, toward limits $g_i^*$ that form a solution for \eqref{eq:mot_decomposed_dual_Linfty} with cost $c$.
Using this subsequence, we define $f^* \in \mathcal{F}$ according to
\begin{equation}
   f^*_i(\widetilde{x}) := \limsup_{n \to \infty} f^n_i(\widetilde{x}), \quad \tilde x \in \X. 
   \label{eq:Limsupf_i}
\end{equation}
Notice that each $f^*_i$ is indeed Borel measurable since it is the $\limsup$ of Borel measurable functions. In addition, notice that $0 \leq f^*_i \leq 1$, due to the fact that $0\leq f^n_i \leq 1$ for all $i \in \Y$ and all $n \in \N$. We'll conclude by proving that $f^*$ is a solution for \eqref{def:DRO_model}.


Let $\widetilde{\mu}\in \mathcal{P}(\mathcal{Z})$ be an arbitrary Borel probability measure with $C(\mu, \widetilde \mu)<\infty$. For each $i \in \Y$ let $\pi_i$ be an optimal coupling realizing the cost $C(\mu_i, \widetilde{\mu}_i)$. Then
\begin{align*}
    &R(f^*, \widetilde{\mu}) - C(\mu, \widetilde{\mu})\\ 
    &= 1 - \sum_{i \in \Y} \int_{\mathcal{X}} f^*_i(\widetilde{x}) d\widetilde{\mu}_i(\widetilde{x}) - \sum_{i \in \Y} \int_{\mathcal{X} \times \mathcal{X}} c(x, \widetilde{x}) d\pi_i(x, \widetilde{x})\\
    &= 1 - \sum_{i \in \Y} \int_{\mathcal{X} \times \mathcal{X}} \left( f^*_i(\widetilde{x}) +  c(x, \widetilde{x}) \right) d\pi_i(x, \widetilde{x})\\
    &= 1 - \sum_{i \in \Y} \int_{\mathcal{X} \times \mathcal{X}} \left( \limsup_{n \to \infty} \max\left\{ \sup_{x' \in \spt(\mu_i)} \left\{ g^n_i(x') - c_n(x', \widetilde{x}) \right\}, 0 \right\} +  c(x, \widetilde{x}) \right) d\pi_i(x, \widetilde{x})\\
    &\leq 1 - \sum_{i \in \Y} \int_{\mathcal{X} \times \mathcal{X}} \left( \limsup_{n \to \infty} \sup_{x' \in \spt(\mu_i)} \left\{ g^n_i(x') - c_n(x', \widetilde{x}) \right\}+  c(x, \widetilde{x}) \right) d\pi_i(x, \widetilde{x})
\end{align*}
where the last inequality follows from the simple fact that $-\max\{a, 0\} \leq - a$ for any $a \in \mathbb{R}$. Choosing $x' = x$ in the $\sup$ term (notice that indeed $x$ can be assumed to belong to $\spt(\mu_i)$ since $\pi_i$ has first marginal equal to $\mu_i$), and applying reverse Fatou's lemma, we find that
\begin{align*}
    R(f^*, \widetilde{\mu}) - C(\mu, \widetilde{\mu}) & \leq 1 - \sum_{i \in \Y} \int_{\mathcal{X} \times \mathcal{X}} \limsup_{n \to \infty} \left\{ g^n_i(x) - c_n(x, \widetilde{x})  +  c_n(x, \widetilde{x}) \right\} d\pi_i(x, \widetilde{x})\\
    &= 1 - \sum_{i \in \Y} \int_{\mathcal{X} \times \mathcal{X}} \limsup_{n \to \infty} \left\{ g^n_i(x) \right\} d\pi_i(x, \widetilde{x})
    \\& = 1 - \sum_{i \in \Y} \int_{\mathcal{X} } \limsup_{n \to \infty} \left\{ g^n_i(x) \right\} d\mu_i(x)\\
    &\leq 1 - \limsup_{n \to \infty} \sum_{i \in \Y} \int_{\mathcal{X}}  g^n_i(x) d\mu_i(x)\\ 
    &= 1 - \sum_{i \in \Y} \int_{\mathcal{X}} g^*_i(x) d\mu_i(x)
    \\&= R^*_{DRO}, 
\end{align*} 
where the third equality follows from the weak$^*$ convergence of $g^n_i$ toward $g^*_i$ and the last equality follows from remark \ref{rem:DualityGandDRO} and the fact that $g^*$ is a solution for \eqref{eq:mot_decomposed_dual_Linfty} (combined with remark \ref{rem:DualInLinfty}). Taking the sup over $\widetilde \mu \in \mathcal{P}(\Z)$, we conclude that
\[ \sup_{ \widetilde \mu \in \mathcal{P}(\Z) } \{ R(f^*, \widetilde \mu ) - C(\mu, \widetilde \mu) \}  \leq R^*_{DRO} ,   \]
and thus $f^*$ is indeed a minimizer of \eqref{def:DRO_model}.


Let now $\widetilde\mu^*$ be a solution of \eqref{eq:generalized_barycenter} (which exists due to Theorem \ref{thm:adversary_part}). The fact that $(\widetilde \mu^*, f^*)$ is a saddle for \eqref{def:DRO_model} follows from the above computations and the fact that by Theorem \ref{thm:adversary_part} and Corollary 32 in \cite{trillos2023multimarginal} we have
\begin{align*}
    R^*_{DRO} =  \sup_{\widetilde{\mu} \in \mathcal{P}(\mathcal{Z})} \inf_{f \in \mathcal{F}}  \left\{ R(f, \widetilde{\mu}) - C(\mu, \widetilde{\mu}) \right\}= \inf_{f \in \mathcal{F}}  \left\{ R(f, \widetilde{\mu}^*) - C(\mu, \widetilde{\mu}^*) \right\}.
\end{align*}

\end{proof}

The next proposition states that the function $g^*_i$ constructed in the proof of Proposition \ref{prop:optimality_weak_convergence_dual} is a Borel measurable version of the $c$-transform of $f^*_i$, where $f_i^*$ was defined in \eqref{eq:Limsupf_i}.

\begin{proposition}\label{prop : meaurable version}
Let $\{g^n\}_{n \in \N}$ and $\{ f_n \}_{n \in \N}$ be as in the proof of Proposition \ref{prop:optimality_weak_convergence_dual}, let $g^*$ be the weak$^*$ limit of the $g^n$, and let $f^*$ be as defined in \eqref{eq:Limsupf_i}. Then, for every $i\in \Y$,
\begin{equation}\label{eq: meaurable version g_i}
    g_i^*(x) = \inf_{\widetilde{x} \in \mathcal{X}} \left\{ f^*_i(\widetilde{x}) + c(x, \widetilde{x}) \right\}
\end{equation}
for $\mu_i$-a.e. $x\in \X$. This statement must be interpreted as: the set in which \eqref{eq: meaurable version g_i} is violated is contained in a Borel measurable set with zero $\mu_i$ measure.
\end{proposition}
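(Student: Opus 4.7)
The plan is to sandwich $g_i^*$ $\mu_i$-a.e.\ between $\widetilde{g}_i(x) := \inf_{\widetilde{x} \in \X}\{f_i^*(\widetilde{x}) + c(x, \widetilde{x})\}$ and the pointwise $\limsup_{n\to\infty} g_i^n(x)$, using the saddle-point structure extracted from the proof of Theorem \ref{thm : measurable robust classifier}. The pointwise limsup serves as a convenient intermediary: it is bounded above by $\widetilde{g}_i$ at every point of $\spt(\mu_i)$ from the construction of $f_i^n$, and bounded above $\mu_i$-a.e.\ by $g_i^*$ via a monotone-envelope argument exploiting the weak$^*$ convergence.

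The first bound is elementary: from $g_i^n(x) \leq f_i^n(\widetilde{x}) + c_n(x, \widetilde{x}) \leq f_i^n(\widetilde{x}) + c(x, \widetilde{x})$, valid for every $x \in \spt(\mu_i)$, every $\widetilde{x} \in \X$, and every $n \in \N$, taking $\limsup$ in $n$ and then infimum over $\widetilde{x}$ yields $\limsup_n g_i^n(x) \leq \widetilde{g}_i(x)$ on $\spt(\mu_i)$. For the reverse bound, I would apply the chain of inequalities from the proof of Theorem \ref{thm : measurable robust classifier} with $\widetilde{\mu} = \widetilde{\mu}^*$. Since $(f^*, \widetilde{\mu}^*)$ is a saddle, the chain collapses to a chain of equalities; each inequality in it has a non-negative summand that is summed over $i$, so each individual summand must vanish. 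Two facts emerge: (a) $f_i^*(\widetilde{x}) + c(x, \widetilde{x}) = \limsup_n g_i^n(x)$ for $\pi_i^*$-a.e.\ $(x, \widetilde{x})$, where $\pi_i^*$ is the optimal coupling realizing $C(\mu_i, \widetilde{\mu}_i^*)$, obtained by combining $f_i^*(\widetilde{x}) = \limsup_n \sup_{x'}\{g_i^n(x') - c_n(x', \widetilde{x})\} = \limsup_n\{g_i^n(x) - c_n(x, \widetilde{x})\}$ with the identity $\limsup_n\{g_i^n(x) - c_n(x, \widetilde{x})\} + c(x, \widetilde{x}) = \limsup_n g_i^n(x)$, valid wherever $c(x, \widetilde{x}) < \infty$ (and $\pi_i^*$-a.e.\ so, by finiteness of $C(\mu_i, \widetilde{\mu}_i^*)$); and (b) $\int g_i^* \, d\mu_i = \int \limsup_n g_i^n \, d\mu_i$ for each $i$, obtained from reverse Fatou, the weak$^*$ identity $\int g_i^n \, d\mu_i \to \int g_i^* \, d\mu_i$, and the summed equality. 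Projecting (a) onto the first coordinate gives $\widetilde{g}_i(x) \leq \limsup_n g_i^n(x)$ for $\mu_i$-a.e.\ $x$, which combined with the elementary bound above produces $\widetilde{g}_i = \limsup_n g_i^n$ $\mu_i$-a.e.

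To pin $g_i^*$ to the same limsup, set $h_n := \sup_{m \geq n} g_i^m$. This function is bounded in $[0,1]$, lower semi-continuous (hence Borel, as a supremum of continuous functions), and decreases pointwise to $\limsup_m g_i^m$. For any non-negative $\phi \in L^1(\mu_i)$ and any $m \geq n$, one has $\int \phi g_i^m \, d\mu_i \leq \int \phi h_n \, d\mu_i$; sending $m \to \infty$ by weak$^*$ convergence yields $\int \phi g_i^* \, d\mu_i \leq \int \phi h_n \, d\mu_i$, and letting $n \to \infty$ by dominated convergence gives $\int \phi g_i^* \, d\mu_i \leq \int \phi (\limsup_m g_i^m) \, d\mu_i$. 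Since $\phi$ is arbitrary, $g_i^* \leq \limsup_m g_i^m$ $\mu_i$-a.e., and combined with (b) this forces $g_i^* = \limsup_m g_i^m = \widetilde{g}_i$ $\mu_i$-a.e. The exceptional set is $\mu_i$-null, and since $\mu_i$ is a Borel measure on a Polish space every $\mu_i$-null set is contained in a Borel $\mu_i$-null set, which matches the statement of the proposition.

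The main obstacle is the bookkeeping in (a): verifying that the equalities in the saddle chain correctly pass between the truncated costs $c_n$ and the original cost $c$ underneath $\limsup$, and that the finiteness of the optimal transport cost restricts the pathological set $\{(x,\widetilde{x}) : c(x, \widetilde{x}) = \infty\}$ to a $\pi_i^*$-null region where it does no harm. Everything else is an application of weak$^*$ convergence and monotone/dominated convergence.
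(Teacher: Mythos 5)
Your argument is correct, and while it shares its first half with the paper's proof, it completes the argument by a genuinely different mechanism. Both proofs start from the same two ingredients: the elementary bound $g_i^n(x)\leq f_i^n(\tilde x)+c_n(x,\tilde x)$ on $\spt(\mu_i)$, which after taking $\limsup$ and infimum gives $\limsup_n g_i^n\leq \inf_{\tilde x}\{f_i^*(\tilde x)+c(\cdot,\tilde x)\}$, and the fact that $g_i^*\leq\limsup_n g_i^n$ $\mu_i$-a.e. (your monotone-envelope argument is a re-derivation of Lemma \ref{lem:LimsupWeak^*}, which the paper invokes directly), together with the collapse of the inequality chain in the proof of Theorem \ref{thm : measurable robust classifier} at $\widetilde\mu=\widetilde\mu^*$. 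The divergence is in the reverse inequality: the paper only extracts from the collapsed chain the integral identity $\int f_i^*\,d\widetilde\mu_i^*+\int c\,d\pi_i^*=\int g_i^*\,d\mu_i$, interprets $(g_i^*,f_i^*)$ as optimal Kantorovich potentials for $C(\mu_i,\widetilde\mu_i^*)$, and argues by contradiction that a strict gap would let one improve the dual value past the primal, which requires Lemma \ref{lemma : Universally measurable functions} to replace the (only universally measurable) infimal envelope by a Borel version. You instead push the complementary-slackness reasoning to the pointwise level: every nonnegative integrand in the collapsed chain vanishes $\pi_i^*$-a.e., yielding your identity (a) $f_i^*(\tilde x)+c(x,\tilde x)=\limsup_n g_i^n(x)$ off a $\pi_i^*$-null set (your bookkeeping with $c_n\uparrow c$ and the $\pi_i^*$-a.e. finiteness of $c$ is sound, since $C(\mu_i,\widetilde\mu_i^*)<\infty$), then project onto the first marginal and use the per-$i$ identity (b) to pin $g_i^*$ to the pointwise limsup. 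Your route avoids invoking the optimal transport duality for $C(\mu_i,\widetilde\mu_i^*)$ and the Borel-version lemma altogether (the projection step is harmless under the proposition's ``contained in a Borel null set'' reading), and it yields the stronger byproducts $g_i^*=\limsup_n g_i^n$ and the $\pi_i^*$-a.e. equality (a); the paper's route is shorter given the duality machinery already assembled and confines all measurability issues to one lemma. Both per-$i$ statements (the paper's integral identity and your (a), (b)) are legitimately extracted from the summed equalities because each per-$i$ term is an integral of a nonnegative, $\pi_i^*$-integrable function, exactly as you indicate.
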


\begin{proof} 
From the proof of Theorem \ref{thm : measurable robust classifier} it holds that for each $i \in \Y$
\begin{equation}
    \int_{\mathcal{X}} f^*_i(\widetilde{x}) d\widetilde{\mu}^*_i(\widetilde{x}) + \int_{\mathcal{X} \times \mathcal{X}} c(x, \widetilde{x}) d\pi^*_i(x, \widetilde{x}) = \int_{\mathcal{X}} g_i^*(x) d\mu_i(x).
    \label{eqn:AUXProp43}
\end{equation}
On the other hand, from the definition of $f_i^n$ it follows that
\[  g_i^n(x) \leq f_i^n(\tilde x) +c_n(x, \tilde x), \quad \forall \tilde x \in \X,  \text{ and } \mu_i\text{-a.e. } x\in \X.  \]
We can then combine the above with Lemma \ref{lem:LimsupWeak^*} to conclude that for $\mu_i$-a.e. $x\in \X$ and every $\tilde x \in \X$ we have
\begin{align*}
   g_i^*(x) \leq \limsup_{n \rightarrow \infty} g_i^n(x) \leq  \limsup_{n \rightarrow \infty}  f_i^n(\tilde x ) + c_n(x, \tilde x)  & = f_i^*(\tilde x) + c(x, \tilde x).
\end{align*}
Taking the inf over $\tilde x \in \X$ we conclude that for $\mu_i$-a.e. $x\in \X$ we have
\begin{equation}\label{eq:AuxgI*}
    g_i^*(x)\leq \inf_{\tilde x \in \X}  \{ f_i^* (\tilde x) + c(x, \tilde x)  \}. 
\end{equation}
From this and \eqref{eqn:AUXProp43} we see that $g_i^* \in L^1(\mu_i)$ and $-f_i^* \in L^1(\tilde {\mu}_i)$ are optimal dual potentials for the optimal transport problem $C(\mu_i, \widetilde \mu^*_i)$. If \eqref{eq:AuxgI*} did not hold with equality for $\mu_i$-a.e. $x \in \X$, then we would be able to construct a Borel-measurable version $h_i$ of the right hand side of \eqref{eq:AuxgI*} (see Lemma \ref{lemma : Universally measurable functions} in the Appendix) which would be strictly greater than $g_i^*$ in a set of positive $\mu_i$-measure. In addition, we would have that $(h_i, -f_i^*)$ is a feasible dual pair for the OT problem $C(\mu_i, \tilde{\mu}_i)$. However, the above would contradict the optimality of the dual potentials $(g_i^*, -f_i^*)$. We thus conclude that \eqref{eq:AuxgI*} holds with equality except on a set contained in a set of $\mu_i$ measure zero.  
\end{proof}

\subsection{Well-posedness of the closed-ball model \eqref{def:closed-ball_model-corrected}}
\label{sec:ClosedBallWell}

\begin{proof}[Proof of Proposition \ref{prop : borel_general_closed_ball}]
We actually prove that for arbitrary cost $c$ satisfying Assumption \ref{assump}, the solution $f^*$ to \eqref{def:DRO_model} constructed in the proof of Theorem \ref{thm : measurable robust classifier} is also a solution for the problem:
\begin{equation}\label{def: borel_general_closed_ball}
    \inf_{f \in \mathcal{F}} \left\{ \sum_{i \in \Y} \int_{\mathcal{X}} \sup_{\widetilde{x} \in \mathcal{X}} \left\{ 1 - f_i(\widetilde{x}) - c(x, \widetilde{x}) \right\} d\overline{\mu}_i(x) \right\}.
\end{equation}
Proposition \ref{prop : borel_general_closed_ball} will then be an immediate consequence of this more general result when applied to $c=c_\varepsilon$.

Let $f^*$ be the Borel solution of \eqref{def:DRO_model} constructed in the proof of Theorem \ref{thm : measurable robust classifier}. It suffices to show that for any $f \in \mathcal{F}$
\begin{equation*}
    \sum_{i \in \Y} \int_{\mathcal{X}} \inf_{\widetilde{x} \in \mathcal{X}} \left\{ f^*_i(\widetilde{x}) + c(x, \widetilde{x}) \right\} d\overline{\mu}_i(x) \geq \sum_{i \in \Y} \int_{\mathcal{X}} \inf_{\widetilde{x} \in \mathcal{X}} \left\{ f_i(\widetilde{x}) + c(x, \widetilde{x}) \right\} d\overline{\mu}_i(x).
\end{equation*}
Suppose not. Then there exists some $\widehat{f} \in \mathcal{F}$ which provides a strict inequality in the opposite direction. Now, on one hand, \eqref{eq: meaurable version g_i} of Proposition \ref{prop : meaurable version} implies
\begin{equation*}
     \sum_{i \in \Y} \int_{\mathcal{X}} \inf_{\widetilde{x} \in \mathcal{X}} \left\{ f^*_i(\widetilde{x}) + c(x, \widetilde{x}) \right\} d\overline{\mu}_i(x) =  \sum_{i \in \Y} \int_{\mathcal{X}} g_i^*(x) d\mu_i(x).
\end{equation*}
On the other hand, by Lemma \ref{lemma : Universally measurable functions}, for each $i \in \Y$ there exists a Borel measurable function $\widehat{g}_i$ equal to $\inf_{\widetilde{x} \in \mathcal{X}} \{ \widehat{f}_i(\widetilde{x}) + c(x, \widetilde{x}) \}$ $\overline{\mu}_i$-almost everywhere. Let $\widehat{g}:=(\widehat{g}_1, \dots, \widehat{g}_K)$. Combining the existence of such $\widehat{g}$ with the above equation, and using \eqref{eq : Universally measurable functions}, it follows that $\widehat{g}$ satisfies
\begin{equation}\label{eq : contradictory inequality}
    \sum_{i \in \Y} \int_{\mathcal{X}} g_i^*(x) d\mu_i(x) < \sum_{i \in \Y} \int_{\mathcal{X}} \widehat{g}_i(x) d\mu_i(x).
\end{equation}
Notice that for each $A \in S_K$ and $\otimes \overline{\mu}_i$-almost everywhere $x_1, \dots x_K$, we have 
\[ \sum_{i \in A}\inf_{\widetilde{x} \in \mathcal{X}} \left\{ \widehat{f}_i(\widetilde{x}) + c(x_i, \widetilde{x}) \right\} \leq \inf_{\widetilde{x} \in \mathcal{X}} \left\{ \sum_{i \in A} \widehat{f}_i(\widetilde{x}) + c(x_i, \widetilde{x}) \right\}  \leq 1 + c_A(x_1, \dots, x_K).  \] 
From the above we conclude that $\widehat{g}$ is feasible for \eqref{eq:mot_decomposed_dual_Linfty}. However, this and \eqref{eq : contradictory inequality} combined contradict the fact that $g^*$ is optimal for \eqref{eq:mot_decomposed_dual_Linfty}, as had been shown in Proposition \ref{prop:optimality_weak_convergence_dual}.
\end{proof}

\begin{proof}[Proof of Corollary \ref{cor:BinaryCase}]
It is straightforward to verify (e.g., see \cite{bungert2023geometry}) that for $(f_1,1-f_1) \in \F$ we can write 
\begin{equation}
    \overline{R}_\varepsilon((f_1, 1-f_1)) = \int_0^1 \overline{R}_\varepsilon((\mathds{1}_{\{f_1 \geq t \}}, \mathds{1}_{ \{f_1 \geq t \}^c})) dt.
    \label{eq:Coarea}
\end{equation}
It is also straightforward to see that
\[  \overline{R}_\varepsilon((\mathds{1}_{\{f_1 \geq t \}}, \mathds{1}_{ \{f_1 \geq t \}^c})) = \int_{\X} \sup_{ \tilde x \in \overline{B}_\varepsilon(x)}  \mathds{1}_{A^c}(\tilde x) d\overline{\mu}_1(x) + \int_{\X} \sup_{ \tilde x \in \overline{B}_\varepsilon(x)}  \mathds{1}_{A}(\tilde x) d\overline{\mu}_2(x).  \]

Let $(f_1, 1-f_1)$ be a solution to \eqref{def:closed-ball_model-corrected} (which by remark \ref{rem:01Loss} can indeed be taken of this form). It follows from \eqref{eq:Coarea} that for almost every $t \in [0,1]$ the pair $(\mathds{1}_{\{f_1 \geq t \}}, \mathds{1}_{ \{f_1 \geq t \}^c})$ is also a solution for that same problem and thus also for the problem restricted to hard-classifiers. This proves the desired result.   
\end{proof}

\subsection{Connection between closed-ball model and open-ball model}\label{section: unify all models}
\begin{proof}[Proof of Theorem \ref{thm:AllmodelsSame}]

One can easily observe that for any fixed $\varepsilon > 0$ and $\delta > 0$ we have
\begin{equation*}
    \sup_{\widetilde{x} \in B_{\varepsilon}(x)} \{ 1 - f_i(\widetilde{x}) \}  \leq \sup_{\widetilde{x} \in \overline{B}_{\varepsilon}(x)} \{ 1 - f_i(\widetilde{x}) \} \leq \sup_{\widetilde{x} \in B_{\varepsilon + \delta}(x)} \{ 1 - f_i(\widetilde{x}) \} 
\end{equation*}
for all $x \in \mathcal{X}$ and all $f\in \F$. This simple observation leads to $R^o_{\varepsilon}(f) \leq \overline{R}_{\varepsilon}(f) \leq R^o_{\varepsilon + \delta}(f)$ for all $f \in \F$.
Thus we also have $R_\varepsilon^o \leq \overline{R}_\varepsilon \leq  R_{\varepsilon +\delta}^o  $ and in particular $R_\varepsilon^o \leq \overline{R}_\varepsilon \leq \liminf_{\delta \rightarrow 0} R_{\varepsilon + \delta}^o $.  From the above we can also see that the function $\varepsilon \mapsto R_{\varepsilon}^o $ is non-decreasing and as such it is continuous for all but at most countably many values of $\varepsilon>0$. Therefore, for all but at most countably many $\varepsilon$ we have $R_\varepsilon^o = \overline{R}_\varepsilon $. 

Now, let $f^*$ be  solution of \eqref{def:closed-ball_model-corrected} and assume we have $R_\varepsilon^o = \overline{R}_\varepsilon $. Then
\[ 
R_\varepsilon^o(f^*) \leq \overline{R}_\varepsilon(f^*) = \overline{R}_\varepsilon = R_\varepsilon^o,  
\]
which means $f^*$ is a solution of \eqref{def:open-ball_model}.
\end{proof}

\section{Conclusion and future works}\label{section : conclusion and future works}

In this paper, we establish the equivalence of three popular models of multiclass adversarial training, the open ball model, the closed ball model, and the distributionally robust optimization model and for the first time (with the exception of partial results in \cite{bungert2023geometry}) prove the existence of Borel measurable optimal robust classifiers in the agnostic-classifier setting.  We are able to unify these models via a framework we have developed that connects these problems to optimal transport and total variation minimization problems.  Notably, our results show that it is unnecessary to grapple with the cumbersome machinery of universal sigma algebras, which was needed to prove existence of classifiers in past results.


Although our analysis sheds light on this area, many open questions still remain on both the theoretical and practical side. One of the most important practical questions is how to extend these results when the set of classifiers $\mathcal{F}$ is some parametric family, for example neural networks.  In particular, one would like to specify the properties a parametric family must satisfy in order to approximate robust classifiers to some desired degree of accuracy.  In the case of neural networks, one might ask for the number of neurons or number of layers that are required for robust classification. 

Related to the above practical question is the following geometric/theoretical question:
given an optimal robust classifier $f^*$, can we give a characterization of the regularity of $f^*$ as in \cite{bungert2023geometry}?  In particular, one would like to quantify the smoothness of the interfaces between the different classes. In general, we cannot guarantee that $f^*$ is a hard classifier, thus, this problem is best posed as a question about the smoothness of the level sets of $f^*$. 
Since optimal classifiers need not be unique, one can also pose the more general question of when it is possible to find at least one  optimal Borel robust classifier with some specified regularity property.
Due to the connection between approximation and regularity, answering this question will provide insights to the previous question of how well one can approximate optimal robust classifiers using certain parametric families.

A final question is how to extend our framework to other more general settings. In this paper, we have assumed throughout that the loss function is the $0$-$1$ loss.  However, most practitioners prefer strongly convex loss functions, for example, the cross entropy function, which allows for faster optimization and has other desirable properties.  As a result, one would like to establish the analog of these results in this more general setting.  This would be crucial for bringing these theoretical insights closer to the models favored by working practitioners.


\section*{Acknowledgments}
NGT is supported by the NSF grants DMS-2005797 and DMS-2236447.

\bibliographystyle{siamplain}
\bibliography{references}

\section*{Appendix}\label{section : Appendix}

\subsection*{Weak$^*$ topology}
\label{sec:WeakStarTopology}
\begin{definition}[Weak$^*$ topology]\label{definition : weak* topology}
Let $\mu=(\mu_1, \dots, \mu_K) \in \prod_{i=1}^K \mathcal{M}_+(\mathcal{X})$. For a sequence $\{h^n\}_{n \in \N} \subseteq \prod_{i \in \Y} L^{\infty}(\mathcal{X};  \mu_i)$, we say that $\{h^n\}$ weak$^*$-converges to $h \in \prod_{i \in \Y} L^{\infty}(\mathcal{X};  \mu_i)$ if for any $q \in \prod_{i \in \Y} L^{1}(\mathcal{X};  \mu_i)$, it holds that
\begin{equation}\label{def:weak*_convergence}
    \lim_{n \to \infty} \int_{\mathcal{X}} h_i^n(x) q_i(x) d\mu_i(x) = \int_{\mathcal{X}} h_i(x) q_i(x) d\mu_i(x)
\end{equation}
for all $i \in \Y$.
\end{definition}

\begin{remark}
Note that for a Borel positive measure $\rho$ which is either finite or $\sigma$-finite over a Polish space, the dual of $L^1(\rho)$ is $L^{\infty}(\rho)$, which justifies the definition \eqref{def:weak*_convergence}.
\end{remark}

\begin{lemma}
\label{lem:LimsupWeak^*}
Suppose $\{ g_i^n \}_{n \in \N}$ is a sequence of measurable real-valued functions over $\X$ satisfying $0 \leq g_i^n \leq 1$ for every $n \in \N$. Suppose that $g_i^n $ converges in the weak$^*$ topology of $L^\infty(\X; \mu_i)$ toward $g_i$, where $\mu_i$ is a finite positive measure. Then, for $\mu_i$-a.e. $x \in X$, we have
\[\limsup_{n \rightarrow \infty} g_i^n (x) \geq g_i(x).\]
\end{lemma}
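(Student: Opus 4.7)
The plan is to combine reverse Fatou's lemma with the defining property of weak$^*$ convergence. Define $h(x) := \limsup_{n \to \infty} g_i^n(x)$, which is Borel measurable as the pointwise $\limsup$ of measurable functions and satisfies $0 \leq h \leq 1$ since the same bounds hold for each $g_i^n$. The goal is to show $h \geq g_i$ $\mu_i$-a.e.

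First I would test against an arbitrary non-negative $q \in L^1(\X; \mu_i)$. Since $0 \leq g_i^n(x) q(x) \leq q(x)$ and $q$ is integrable, reverse Fatou's lemma applies and yields
\begin{equation*}
    \int_\X h(x) q(x) \, d\mu_i(x) = \int_\X \limsup_{n \to \infty} g_i^n(x) q(x) \, d\mu_i(x) \geq \limsup_{n \to \infty} \int_\X g_i^n(x) q(x) \, d\mu_i(x).
\end{equation*}
By the hypothesis that $g_i^n \to g_i$ in the weak$^*$ topology of $L^\infty(\X; \mu_i)$, the right-hand side is actually a bona fide limit and equals $\int_\X g_i(x) q(x) \, d\mu_i(x)$. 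Combining these two facts gives $\int_\X (h - g_i) q \, d\mu_i \geq 0$ for every non-negative $q \in L^1(\X; \mu_i)$.

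To conclude, I would choose $q = \mathds{1}_{\{g_i > h\}}$, which is a non-negative bounded function and hence lies in $L^1(\X; \mu_i)$ because $\mu_i$ is a finite positive measure. Plugging this choice into the previous inequality gives
\begin{equation*}
    0 \leq \int_{\{g_i > h\}} (h - g_i) \, d\mu_i,
\end{equation*}
yet the integrand is strictly negative on the set of integration. This forces $\mu_i(\{g_i > h\}) = 0$, i.e., $\limsup_{n\to\infty} g_i^n(x) \geq g_i(x)$ for $\mu_i$-a.e.\ $x \in \X$. There is no real obstacle here: the uniform bound $g_i^n \leq 1$ furnishes the integrable majorant needed for reverse Fatou, and the finiteness of $\mu_i$ ensures indicator functions lie in $L^1$ so that they can be used as test functions for the weak$^*$ convergence.
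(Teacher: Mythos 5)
Your proof is correct and follows essentially the same route as the paper: apply reverse Fatou's lemma (justified by the uniform bound $0 \leq g_i^n \leq 1$) together with weak$^*$ convergence against indicator test functions, and conclude that the exceptional set $\{g_i > \limsup_n g_i^n\}$ has $\mu_i$-measure zero. Your specialization to $q = \mathds{1}_{\{g_i > h\}}$ just makes explicit the final step that the paper summarizes as ``since $E$ was arbitrary, the result follows.''
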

\begin{proof}
Let $E$ be a measurable subset of $\X$. Then 
\[ \int_{\X} (\limsup_{n \rightarrow \infty} g^n_i(x) - g_i(x) ) \mathds{1}_E(x) d\mu_i(x) \geq  \limsup_{n \rightarrow \infty} \int_{\X} ( g^n_i(x) - g_i(x) ) \mathds{1}_E(x) d\mu_i(x) = 0 ,  \]
by the reverse Fatou inequality and the assumption that the sequence $\{g_i^n\}_{n \in \N}$ converges in the weak$^*$ sense toward $g_i$. Since $E$ was arbitrary, the result follows.    
\end{proof}

\subsection*{$c$-transform}\label{appendix : c-transform}
$c$-transform has an important role in optimal transport theory. One can characterize an optimizer of a dual problem by iterating $c$-transform: see \cite{MR1964483, Oldandnew} for more details.

\begin{definition}[$c$-transform in \cite{Oldandnew}]
Let $\mathcal{X}, \mathcal{X}'$ be measurable spaces, and let $c : \mathcal{X} \times \mathcal{X}' \to (-\infty, \infty]$. Given a measurable function $h : \mathcal{X} \to \mathbb{R} \cup \{\infty, -\infty\}$, its $c$-transform is defined as
\begin{equation*}
    h^{c}(x'):=\inf_{x \in \mathcal{X}} \{ h(x) + c(x, x') \}.
\end{equation*}
Similarly, for $g : \mathcal{X}' \to \mathbb{R} \cup \{\infty, -\infty\}$, its $\overline{c}$-transform is defined as
\begin{equation*}
    g^{\overline{c}}(x):=\sup_{x' \in \mathcal{X}'} \{ g(x') - c(x, x') \}.
\end{equation*}
\label{def:cTransforms}
\end{definition}

\begin{proposition}
For any mearurable functions $h$ over $\mathcal{X}$ and $g$ over $\mathcal{X}'$, and cost function $c : \mathcal{X} \times \mathcal{X}' \to (-\infty, \infty]$, it holds that for every $(x, x') \in \mathcal{X} \times \mathcal{X}'$,
\begin{equation*}
    h^{c}(x') - h(x) \leq c(x, x'), \quad g(x') - g^{\overline{c}}(x) \leq c(x,x').
\end{equation*}
\end{proposition}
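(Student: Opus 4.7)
The plan is to prove both inequalities by directly exploiting the definitions of the $c$-transform and $\bar{c}$-transform as an infimum and a supremum, respectively, over all possible choices of the optimization variable. In each case the inequality reduces to the trivial observation that a specific choice of the variable over which one is optimizing gives an upper (respectively lower) bound for the optimal value.

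For the first inequality, I would fix an arbitrary pair $(x,x') \in \mathcal{X} \times \mathcal{X}'$ and start from the definition
\[ h^c(x') = \inf_{\tilde x \in \mathcal{X}} \{ h(\tilde x) + c(\tilde x, x') \}. \]
Plugging in the specific point $\tilde x = x$ as a feasible choice in the infimum, I obtain $h^c(x') \leq h(x) + c(x,x')$, which, upon rearrangement, is precisely $h^c(x') - h(x) \leq c(x,x')$.

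For the second inequality, I would again fix $(x,x') \in \mathcal{X} \times \mathcal{X}'$ and use
\[ g^{\bar c}(x) = \sup_{\tilde x' \in \mathcal{X}'} \{ g(\tilde x') - c(x, \tilde x') \}. \]
Choosing $\tilde x' = x'$ as a feasible element in the supremum yields $g^{\bar c}(x) \geq g(x') - c(x,x')$, which rearranges to $g(x') - g^{\bar c}(x) \leq c(x,x')$, as desired.

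There is no genuine obstacle here: both inequalities are immediate consequences of evaluating an infimum or supremum at a single point, and the fact that the cost $c$ may take the value $+\infty$ causes no issue since the inequalities are trivially satisfied in that case (both sides of the first inequality can be interpreted in the extended reals, and the second becomes $-\infty \leq \infty$ when $c(x,x') = \infty$). No measurability or regularity hypothesis on $h$, $g$, or $c$ beyond what is already stated is used.
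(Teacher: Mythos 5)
Your proof is correct and is exactly the standard one-line argument the paper has in mind (the paper in fact states this proposition without proof, treating it as immediate from the definitions): evaluate the infimum defining $h^c(x')$ at $\tilde x = x$ and the supremum defining $g^{\overline{c}}(x)$ at $\tilde x' = x'$, then rearrange. Nothing further is needed.
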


\begin{theorem}[Theorem 5.10 in \cite{Oldandnew}]\label{thm : duality and c-transform}
Let $\mathcal{X}$ be a Polish space and $c(\cdot, \cdot)$ be a cost function bounded from below and lower semi-continuous. Then, for $\nu, \widetilde{\nu} \in \mathcal{P}(\mathcal{X})$,
\begin{align*}
    \inf_{\pi_i \in \Gamma(\nu, \widetilde{\nu})}\int_{\mathcal{X} \times \mathcal{X}} c(x, \widetilde{x}) d\pi_i(x, \widetilde{x}) &= \sup_{g_i, f_i \in \mathcal{C}_b, g_i-f_i \leq c} \left\{ \int_{\mathcal{X}} g_i(x) d\nu(x) - \int_{\mathcal{X}} f_i(\widetilde{x}) d\widetilde{\nu}(\widetilde{x}) \right\}\\
    &= \sup_{f_i \in L^1(\widetilde{\nu})} \left\{ \int_{\mathcal{X}} (f_i)^c(x) d\nu(x) - \int_{\mathcal{X}} f_i(\widetilde{x}) d\widetilde{\nu}(\widetilde{x}) \right\}\\
    &= \sup_{g_i \in L^1(\nu)} \left\{ \int_{\mathcal{X}} g_i(x) d\nu(x) - \int_{\mathcal{X}} (g_i)^{\overline{c}}(\widetilde{x}) d\widetilde{\nu}(\widetilde{x}) \right\}.
\end{align*}
Furthermore, the infimum is indeed a minimum. However, the supremum may not be achieved.
\end{theorem}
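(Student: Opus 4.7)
The plan is to exploit the elementary monotonicity sandwich between open and closed ball suprema and then invoke the classical fact that a monotone function on the real line has at most countably many discontinuities. The starting observation is that for every $f \in \mathcal{F}$, every $x \in \mathcal{X}$, every $i \in \Y$, every $\varepsilon \geq 0$, and every $\delta > 0$, the chain of set inclusions $B_\varepsilon(x) \subseteq \overline{B}_\varepsilon(x) \subseteq B_{\varepsilon+\delta}(x)$ yields the pointwise inequalities
\begin{equation*}
\sup_{\tilde x \in B_\varepsilon(x)} \{1 - f_i(\tilde x)\} \;\leq\; \sup_{\tilde x \in \overline{B}_\varepsilon(x)} \{1 - f_i(\tilde x)\} \;\leq\; \sup_{\tilde x \in B_{\varepsilon+\delta}(x)} \{1 - f_i(\tilde x)\}.
\end{equation*}
Integrating each of these against $\mu_i$ (using $d\overline{\mu}_i = d\mu_i$ on Borel sets, which is legitimate on the middle quantity because $x \mapsto \sup_{\tilde x \in \overline{B}_\varepsilon(x)}\{1 - f_i(\tilde x)\}$ is universally measurable as noted after the statement of \eqref{def:closed-ball_model-corrected}) and summing in $i$ gives $R^o_\varepsilon(f) \leq \overline{R}_\varepsilon(f) \leq R^o_{\varepsilon+\delta}(f)$, hence, after infimizing over $f \in \mathcal{F}$, $R^o_\varepsilon \leq \overline{R}_\varepsilon \leq R^o_{\varepsilon+\delta}$.

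Next I would observe that the same inclusion argument shows $\varepsilon \mapsto R^o_\varepsilon$ is non-decreasing on $[0,\infty)$. Since any monotone real-valued function on an interval has at most countably many points of discontinuity, the right limit $\lim_{\delta \downarrow 0} R^o_{\varepsilon+\delta}$ equals $R^o_\varepsilon$ for all $\varepsilon$ outside some countable set $N \subseteq [0,\infty)$. Taking $\delta \downarrow 0$ in the sandwich $R^o_\varepsilon \leq \overline{R}_\varepsilon \leq R^o_{\varepsilon+\delta}$ immediately yields $R^o_\varepsilon = \overline{R}_\varepsilon$ for every $\varepsilon \in [0,\infty) \setminus N$, which is the first claim.

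For the second claim, suppose $\varepsilon \in [0,\infty) \setminus N$ (so $R^o_\varepsilon = \overline{R}_\varepsilon$) and let $f^* \in \mathcal{F}$ be any minimizer of \eqref{def:closed-ball_model-corrected}, whose existence is guaranteed by Theorem \ref{prop : borel_general_closed_ball}. Apply the pointwise monotonicity at the function level to $f^*$ to obtain
\begin{equation*}
R^o_\varepsilon(f^*) \;\leq\; \overline{R}_\varepsilon(f^*) \;=\; \overline{R}_\varepsilon \;=\; R^o_\varepsilon,
\end{equation*}
which sandwiches $R^o_\varepsilon(f^*)$ between $R^o_\varepsilon$ and itself and hence shows $f^*$ is optimal for \eqref{def:open-ball_model}.

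The whole argument is essentially a one-variable monotonicity trick; the only place where one might anticipate trouble is the very first step, ensuring that the middle quantity in the sandwich is in fact well-defined and equal to $\overline{R}_\varepsilon(f)$, but this is already handled upstream by the use of $\overline{\mu}_i$ in the definition of $\overline{R}_\varepsilon(f)$ together with the universal measurability of the closed-ball sup, so no additional work is needed. Consequently, I do not expect any genuine obstacle in executing this proof.
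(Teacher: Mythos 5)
Your proposal does not address the statement it is supposed to prove. The statement here is the Kantorovich duality theorem (Theorem 5.10 in \cite{Oldandnew}, reproduced in the paper's appendix): for a lower semi-continuous cost $c$ bounded below on a Polish space, the optimal transport cost $\inf_{\pi \in \Gamma(\nu,\widetilde{\nu})}\int c \, d\pi$ coincides with the three dual formulations involving pairs of bounded continuous potentials and $c$- and $\overline{c}$-transforms of $L^1$ potentials, and the primal infimum is attained while the dual supremum need not be. A proof of this would have to produce (i) weak duality via the pointwise constraint $g(x)-f(\widetilde{x})\leq c(x,\widetilde{x})$ integrated against any coupling, (ii) attainment of the primal minimum via tightness of $\Gamma(\nu,\widetilde{\nu})$ (Prokhorov) together with lower semicontinuity of $\pi \mapsto \int c\, d\pi$ under weak convergence, and (iii) the absence of a duality gap, e.g.\ by a minimax/Fenchel--Rockafellar argument or by approximation of $c$ by bounded continuous costs, plus the observation that replacing $f$ by $(f)^{c}$ (and $g$ by $(g)^{\overline{c}}$) only improves the dual objective, which gives the equivalence of the three dual expressions. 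None of these ingredients appears anywhere in your write-up.

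What you actually wrote --- the sandwich $\sup_{\tilde x \in B_\varepsilon(x)}\{1-f_i(\tilde x)\} \leq \sup_{\tilde x \in \overline{B}_\varepsilon(x)}\{1-f_i(\tilde x)\} \leq \sup_{\tilde x \in B_{\varepsilon+\delta}(x)}\{1-f_i(\tilde x)\}$, the monotonicity of $\varepsilon \mapsto R^o_\varepsilon$, the countability of its discontinuity set, and the transfer of optimality of a minimizer of \eqref{def:closed-ball_model-corrected} to \eqref{def:open-ball_model} --- is, essentially verbatim, the paper's proof of Theorem \ref{thm:AllmodelsSame}, a different result about the open-ball and closed-ball adversarial risks. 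That argument is fine for that theorem, but it has no bearing on Kantorovich duality: it involves no couplings, no dual potentials, no $c$-transforms, and makes no use of the hypotheses of the statement (lower semicontinuity and boundedness from below of a general cost $c$, arbitrary marginals $\nu, \widetilde{\nu}$). So as a proof of the statement in question there is not a fixable gap but a wholesale mismatch; you need to start over with the duality machinery sketched above (or simply note, as the paper does, that this is a quoted classical result whose proof is in \cite{Oldandnew}).
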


\subsection*{Decomposition of universally measurable functions}\label{subsection : Universally measurable functions}

\begin{lemma}\label{lemma : Universally measurable functions}
Let $\mathcal{X}$ be a Polish space, $\mu$ and let $\overline{\mu}$ be a Borel probability measure and its extension to the universal $\sigma$-algebra, respectively. Let $f$ be a universally measurable function for which $\int_{\X} |f(x)| d \overline{\mu}(x) <\infty$. Then there exists a Borel measurable function $g$ such that $f=g$ $\overline{\mu}$-almost everywhere. Also,
\begin{equation}\label{eq : Universally measurable functions}
    \int_{\mathcal{X}} f(x) d\overline{\mu}(x) = \int_{\mathcal{X}} g(x) d\mu(x).
\end{equation}
\end{lemma}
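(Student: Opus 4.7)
The plan is to reduce the statement to the classical fact that any function measurable with respect to the completion of a Borel measure differs from a Borel-measurable function only on a Borel null set. Since the universal $\sigma$-algebra $\mathcal{U}(\X)$ is by definition contained in $\mathcal{L}_{\mu}(\X)$, the completion of $\mathfrak{B}(\X)$ with respect to $\mu$, any universally measurable $f$ is automatically $\mathcal{L}_{\mu}(\X)$-measurable, and $\overline{\mu}$ restricted to $\mathcal{L}_{\mu}(\X)$ coincides with the usual completion of $\mu$. The problem is therefore purely a question about the completion of a single Borel probability measure.

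First I would handle the case of an indicator function. If $A \in \mathcal{L}_{\mu}(\X)$, then by the definition of the completion there exist Borel sets $B_1, B_2 \in \mathfrak{B}(\X)$ with $B_1 \subseteq A \subseteq B_2$ and $\mu(B_2 \setminus B_1) = 0$; taking $g = \mathds{1}_{B_1}$ gives a Borel function equal to $\mathds{1}_A$ outside the Borel $\mu$-null set $B_2 \setminus B_1$. Next I would extend this to nonnegative simple functions by linearity, and then to an arbitrary nonnegative $\mathcal{L}_{\mu}(\X)$-measurable function $f^+$ by choosing a monotone sequence $\varphi_n \uparrow f^+$ of $\mathcal{L}_{\mu}(\X)$-measurable simple functions, picking Borel representatives $\psi_n$ of each $\varphi_n$, and setting $g^+ := \limsup_n \psi_n$. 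The set on which $g^+ \neq f^+$ is contained in the countable union of the Borel null sets where $\psi_n \neq \varphi_n$, hence is Borel and $\mu$-null. Applying the same procedure to $f^-$ and setting $g := g^+ - g^-$ on the Borel set where both are finite (and $0$ elsewhere) gives the Borel function promised in the statement.

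For the integral equality, once $f = g$ $\overline{\mu}$-a.e.\ with $g$ Borel, one writes
\begin{equation*}
\int_{\X} f(x)\, d\overline{\mu}(x) = \int_{\X} g(x)\, d\overline{\mu}(x) = \int_{\X} g(x)\, d\mu(x),
\end{equation*}
where the first equality uses that $f$ and $g$ differ on an $\overline{\mu}$-null set, and the second uses that $\overline{\mu}$ restricted to the Borel $\sigma$-algebra coincides with $\mu$ by construction, so the integral of a Borel measurable function is the same whether computed against $\mu$ or $\overline{\mu}$. The integrability of $g$ with respect to $\mu$ follows from the assumed integrability of $f$ with respect to $\overline{\mu}$ by the same pair of equalities applied to $|f|$ and $|g|$.

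There is no real obstacle here; the statement is a direct consequence of the construction of the completion, and the only care needed is to verify that the exceptional sets arising at each stage of the simple-function approximation remain Borel (not merely universally measurable), which is guaranteed because they are contained in a countable union of Borel null sets chosen at the indicator level.
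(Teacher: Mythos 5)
Your proof is correct and follows essentially the same route as the paper's: approximate by simple functions, replace each measurable set by a Borel set differing from it on a $\mu$-null Borel set, and pass to the limit of the Borel representatives, with the exceptional set controlled by a countable union of Borel null sets. The only cosmetic differences are that you make the reduction to the completion $\mathcal{L}_{\mu}(\mathcal{X})$ explicit and treat $f^{+}$ and $f^{-}$ separately, whereas the paper assumes $f\geq 0$ and invokes the definition of universal measurability directly at the level of the sets $A_k^n$.
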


\begin{proof}
Without the loss of generality we can assume that $f \geq 0$. Since $f$ is universally measurable, we can write 
\begin{equation*}
    f(x) = \lim_{n \to \infty } f_n(x) := \lim_{n \to \infty } \sum_{k=1}^n c_k^n \mathds{1}_{A_k^n}(x),
\end{equation*}
for positive coefficients $c_1^n, \dots, c_n^n$ and $A_1^n , \dots, A_n^n$ universally measurable and pairwise disjoint sets. By the definition of universally measurable sets, for each $A_k^n$ there exists a Borel set $B_k^n$ such that $\overline{\mu}(A_k^n \setminus B_k^n) = 0$. Hence, for each $n \in \mathbb{N}$, we can write
\begin{equation*}
    f_n(x) = \sum_{k=1}^n c_k^n \mathds{1}_{B_k^n}(x) + \sum_{k=1}^n c_k^n \mathds{1}_{C_k^n}(x),
\end{equation*}
where $C_k^n = A_k^n \setminus B_k^n$.   We conclude that
\begin{equation*}
    f(x) = g(x) + h(x) := \limsup_{n \to \infty } \sum_{k=1}^n c_k^n \mathds{1}_{B_k^n}(x) + \liminf_{n \to \infty } \sum_{k=1}^n c_k^n \mathds{1}_{C_k^n}(x)
\end{equation*}
where $g$ is Borel measurable, $h$ is universally measurable and $h=0$ $\overline{\mu}$-almost everywhere. 

Since $f=g$ $\overline{\mu}$-almost everywhere and $g$ is Borel measurable, then 
\begin{equation*}
    \int_{\mathcal{X}} f(x) d\overline{\mu}(x) = \int_{\mathcal{X}} g(x) d\overline{\mu}(x) = \int_{\mathcal{X}} g(x) d\mu(x),
\end{equation*}
from which \eqref{eq : Universally measurable functions} follows.
\end{proof}

\end{document}